\def\RR{\mathbb{R}}
\def\prob{\ensuremath{\, \mathrm{Prob} \,}}
\begin{document}

\title{Distributed Learning with Regularized Least Squares}

\author{\name Shao-Bo Lin \email sblin1983@gmail.com \\
        \addr Department of Mathematics \\
         City University of Hong Kong \\
         Tat Chee Avenue, Kowloon, Hong Kong, China \\
\name Xin Guo \email x.guo@polyu.edu.hk \\
        \addr Department of Applied Mathematics \\
         The Hong Kong Polytechnic University \\
         Hung Hom, Kowloon, Hong Kong, China \\
        \name Ding-Xuan Zhou \email mazhou@cityu.edu.hk \\
        \addr Department of Mathematics \\
         City University of Hong Kong \\
         Tat Chee Avenue, Kowloon, Hong Kong, China}

\editor{}


\maketitle

\begin{abstract}%
We study distributed learning with the least squares regularization scheme in a reproducing kernel Hilbert space (RKHS). By a divide-and-conquer approach, the algorithm partitions a data set into disjoint data subsets, applies the least squares regularization scheme to each data
subset to produce an output function, and then takes an average of the individual output functions as a final global estimator or predictor. We show with error bounds in expectation in both the $L^2$-metric and RKHS-metric that the global output function of this distributed learning
is a good approximation to the algorithm processing the whole data in one single machine. Our error bounds are sharp and stated in a general setting without any eigenfunction assumption. The analysis is achieved by a novel second order decomposition of operator differences in our integral operator approach. Even for the classical least squares regularization scheme in the RKHS associated with a general kernel, we give the best learning rate in the literature.
\end{abstract}

\begin{keywords}
Distributed learning, divide-and-conquer, error analysis, integral operator, second order decomposition.
\end{keywords}

\section{Introduction and Distributed Learning Algorithms}

In the era of big data, the rapid expansion of computing capacities in automatic data
generation and acquisition brings data of unprecedented size and
complexity, and raises a series of scientific challenges such as
storage bottleneck and algorithmic scalability \citep{Zhou2014}.
To overcome the difficulty, some approaches for generating scalable approximate algorithms have been introduced
in the literature such as low-rank
approximations of kernel matrices for kernel principal component analysis
\citep{Scholkopf1998, Bach2013}, incomplete Cholesky decomposition
\citep{Fine2002}, early-stopping of iterative optimization algorithms
for gradient descent methods \citep{Yao2007, Raskutti2014}, and
greedy-type algorithms. Another method proposed recently is distributed learning
based on a divide-and-conquer approach and a particular learning algorithm implemented in individual machines \citep{Zhang2014, Shamir2014}.
This method produces distributed learning algorithms consisting of three steps: partitioning the data into
disjoint subsets, applying a particular learning algorithm implemented in an individual machine to each data subset
to produce an individual output (function), and synthesizing a global output by utilizing some average of the individual outputs. This method can successfully reduce the time
and memory costs, and its learning performance has been observed in many practical applications to be as good as that of a big machine which could process the whole data.
Theoretical attempts have been recently made in \citep{Zhang2013, Zhang2014} to derive learning rates for distributed learning with least squares regularization under certain assumptions.

This paper aims at error analysis of the distributed learning with
regularized least squares and its approximation to the algorithm
processing the whole data in one single machine. Recall
\citep{Taylor2004, Evgeniou2000} that in a reproducing kernel
Hilbert space (RKHS) $({\mathcal H}_K, \|\cdot\|_K)$ induced by a
Mercer kernel $K$ on an input metric space ${\mathcal X}$, with a
sample $D=\{(x_i, y_i)\}_{i=1}^N \subset {\mathcal X} \times
{\mathcal Y}$ where ${\mathcal Y}=\RR$ is the output space, the
least squares regularization scheme can be stated as
\begin{equation}\label{All estimate}
    f_{D,\lambda} =\arg\min_{f\in \mathcal{H}_{K}}
    \left\{\frac{1}{|D|}\sum_{(x, y)\in D}(f(x)-y)^2+\lambda\|f\|^2_{K}\right\}.
\end{equation}
Here $\lambda >0$ is a regularization parameter and $|D|=:N$ is the cardinality of $D$. This
learning algorithm is also called kernel ridge regression in statistics and has been well studied in learning theory. See e.g. \citep{Devito2005, Caponnetto2007, SteinwartHS, BPR, Smale2007, Steinwart2008}.
The regularization scheme (\ref{All estimate}) can be explicitly
solved by using a standard matrix inversion technique, which
requires costs of $\mathcal O(N^3)$ in time and $\mathcal O(N^2)$ in
memory. However, this matrix inversion
technique may not conquer challenges on storages or computations arising from big data.

The distributed learning algorithm studied in this paper starts with partitioning the data set $D$ into $m$ disjoint subsets $\{D_j\}_{j=1}^m$. Then it assigns each data subset $D_j$ to
one machine or processor to produce a local
estimator $f_{D_j, \lambda}$ by the least squares regularization scheme (\ref{All estimate}). Finally, these local estimators are communicated to a central
processor, and a global estimator $\overline{f}_{D, \lambda}$ is
synthesized by taking a weighted average
\begin{equation}\label{distrilearn}
\overline{f}_{D, \lambda} = \sum_{j=1}^m \frac{|D_j|}{|D|} f_{D_j, \lambda}
\end{equation}
of the local estimators $\{f_{D_j, \lambda}\}_{j=1}^m$. This algorithm has been studied with a matrix analysis approach in \citep{Zhang2014} where some error analysis has been conducted under some eigenfunction assumptions for the integral operator associated with the kernel, presenting error bounds in expectation.

In this paper we shall use a novel integral operator approach to prove that $\overline{f}_{D, \lambda}$ is a good approximation of $f_{D, \lambda}$.
We present a representation of the difference $\overline{f}_{D, \lambda} - f_{D, \lambda}$ in terms of empirical integral operators,
and analyze the error $\overline{f}_{D, \lambda} - f_{D, \lambda}$ in expectation without any eigenfunction assumptions. As a by-product, we present the best learning rates for the least squares regularization scheme (\ref{All estimate}) in a general setting, which surprisingly has not been done for a general kernel in the literature (see detailed comparisons below).

\section{Main Results}

Our analysis is carried out in the standard least squares regression framework with a Borel probability measure $\rho$ on ${\mathcal Z} :={\mathcal X}\times  {\mathcal Y}$, where the input space $\mathcal{X}$ is a compact metric space. The sample $D$ is independently drawn according to $\rho$. The Mercer kernel $K: {\mathcal X}\times {\mathcal X} \to \RR$ defines an integral operator $L_K$ on ${\mathcal H}_K$ as
\begin{equation}\label{integraloper}
L_K(f) =\int_{\mathcal X} K_x f(x)d\rho_X, \qquad f\in {\mathcal H}_K,
\end{equation}
where $K_x$ is the function $K(\cdot, x)$ in ${\mathcal H}_K$ and $\rho_X$ is the marginal distribution of $\rho$ on ${\mathcal X}$.

\subsection{Error Bounds for the Distributed Learning Algorithm}

Our error bounds in expectation for the distributed learning algorithm (\ref{distrilearn}) require the uniform boundedness condition for the output $y$, that is, for some constant $M >0$, there holds $|y| \leq M$ almost surely.
Our bounds are stated in terms of the approximation error
\begin{equation}\label{approxerror}
\|f_\lambda-f_\rho\|_\rho,
\end{equation}
where $f_{\lambda}$ is the data-free limit of (\ref{All estimate}) defined by
\begin{equation}\label{flambda}
    f_{\lambda} =\arg\min_{f\in \mathcal{H}_{K}}
    \left\{\int_{\mathcal Z} (f(x) - y)^2 d \rho +\lambda \|f\|^2_{K}\right\},
\end{equation}
$\|\cdot\|_\rho$ denotes the norm of $L^2_{\rho_{_X}}$, the Hilbert space of square
integrable functions with respect to $\rho_X$, and $f_\rho$ is the regression function (conditional mean) of $\rho$ defined by
$$ f_\rho (x) = \int_{{\mathcal Y}} y d\rho(y|x), \qquad x\in {\mathcal X}, $$
with $\rho(\cdot|x)$ being the conditional distribution of $\rho$ induced at $x\in {\mathcal X}$.

Since $K$ is continuous, symmetric and positive semidefinite,  $L_K$
is a compact positive operator of trace class and $L_{K} +\lambda I$
is  invertible. Define a quantity measuring the complexity of
${\mathcal H}_K$ with respect to $\rho_X$, the {\sl effective
dimension} \citep{Zhang2005}, to be the trace of the operator $(L_K + \lambda
I)^{-1} L_K$ as
\begin{equation}\label{effectdim}
 {\mathcal N}(\lambda) = \hbox{Tr}\left((L_K + \lambda I)^{-1} L_K\right), \qquad \lambda >0.
\end{equation}
In Section \ref{proofs}  we shall prove the following first main
result of this paper concerning error bounds in expectation of
$\overline{f}_{D, \lambda} - f_{D, \lambda}$ in ${\mathcal H}_K$ and
in $L^2_{\rho_{_X}}$. Denote $\kappa=\sup_{x\in\mathcal X}\sqrt{K(x,
x)}$.

\begin{theorem}\label{Mainexpected}
Assume $|y| \leq M$ almost surely. If $|D_j| = \frac{N}{m}$ for $j=1, \ldots, m$, then we have
\begin{eqnarray*}
E\left\|\overline{f}_{D, \lambda} - f_{D, \lambda}\right\|_\rho \leq C_\kappa \left(\frac{m}{(N \lambda)^2} + \frac{{\mathcal N}(\lambda)}{N\lambda}\right) \sqrt{m} \biggl\{\frac{\|f_\lambda-f_\rho\|_\rho}{\sqrt{N\lambda}} m^{2} +  M \sqrt{\lambda} \left\{1+ \frac{m^2}{(N \lambda)^2} + \frac{m{\mathcal N}(\lambda)}{N\lambda}\right\}\biggr\}
\end{eqnarray*}
and
\begin{eqnarray*}
E\left\|\overline{f}_{D, \lambda} - f_{D, \lambda}\right\|_K \leq C_\kappa \left(\frac{m}{(N \lambda)^2} + \frac{{\mathcal N}(\lambda)}{N\lambda}\right) \sqrt{m} \biggl\{\frac{\|f_\lambda-f_\rho\|_\rho}{\sqrt{N}} \frac{m^{2}}{\lambda} +  M \left\{1+ \frac{m^2}{(N \lambda)^2} + \frac{m{\mathcal N}(\lambda)}{N\lambda}\right\}\biggr\},
\end{eqnarray*}
where $C_{\kappa}$ is a constant depending only on $\kappa$.
\end{theorem}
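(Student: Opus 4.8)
\emph{Proof strategy.} The plan is to pass to the integral-operator representation and extract an exact cancellation of the dominant terms. Introduce the sampling operator $S_{D}\colon\HH_K\to\RR^{|D|}$ and the empirical integral operator $L_{K,D}=S_D^*S_D=\frac{1}{|D|}\sum_{(x,y)\in D}\langle\cdot,K_x\rangle_K K_x$, so that $f_{D,\lambda}=(L_{K,D}+\lambda I)^{-1}S_D^*\mathbf{y}_D$, $f_{D_j,\lambda}=(L_{K,D_j}+\lambda I)^{-1}S_{D_j}^*\mathbf{y}_{D_j}$, while the data-free limit obeys $(L_K+\lambda I)f_\lambda=L_Kf_\rho=:g_\rho$. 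Since $|D_j|=N/m$ for every $j$, the two \emph{averaging identities}
$$L_{K,D}=\sum_{j=1}^m\tfrac{|D_j|}{|D|}L_{K,D_j},\qquad S_D^*\mathbf{y}_D=\sum_{j=1}^m\tfrac{|D_j|}{|D|}S_{D_j}^*\mathbf{y}_{D_j}$$
hold; subtracting $f_\lambda$ and using $(L_K+\lambda I)f_\lambda=g_\rho$ gives the basic identity $f_{D',\lambda}-f_\lambda=(L_{K,D'}+\lambda I)^{-1}h_{D'}$, where $h_{D'}:=(S_{D'}^*\mathbf{y}_{D'}-g_\rho)-(L_{K,D'}-L_K)f_\lambda$ has mean zero, and in particular $h_D=\sum_j\tfrac{|D_j|}{|D|}h_{D_j}$.

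One cannot afford the naive triangle inequality $\|\overline f_{D,\lambda}-f_{D,\lambda}\|\le\|\overline f_{D,\lambda}-f_\lambda\|+\|f_{D,\lambda}-f_\lambda\|$, since the first term there is controlled by the (non-vanishing) bias of a single-machine estimator run on only $N/m$ samples and is far too large. Instead I would substitute the \emph{second order decomposition}
$$(L_{K,D'}+\lambda I)^{-1}=A^{-1}-A^{-1}(L_{K,D'}-L_K)A^{-1}+(L_{K,D'}+\lambda I)^{-1}\big[(L_{K,D'}-L_K)A^{-1}\big]^2,\qquad A:=L_K+\lambda I,$$
into $\overline f_{D,\lambda}-f_{D,\lambda}=\sum_j\tfrac{|D_j|}{|D|}(f_{D_j,\lambda}-f_\lambda)-(f_{D,\lambda}-f_\lambda)$. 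The zeroth-order piece $A^{-1}h_{D'}$ is the \emph{same} linear functional of the data for every block and for $D$, so the averaging identities make $\sum_j\tfrac{|D_j|}{|D|}A^{-1}h_{D_j}-A^{-1}h_D$ vanish \emph{identically}; what survives is a sum of terms each carrying at least two of the mean-zero fluctuation factors $(L_{K,D_j}-L_K)$, $(L_{K,D}-L_K)$, $h_{D'}$, i.e.\ a genuinely second order remainder, schematically $R\,(L_{K,D'}-L_K)A^{-1}h_{D'}$ and $R\,(L_{K,D'}-L_K)A^{-1}(L_{K,D'}-L_K)A^{-1}h_{D'}$ with $R\in\{A^{-1},(L_{K,D_j}+\lambda I)^{-1},(L_{K,D}+\lambda I)^{-1}\}$, together with cross-block terms produced by expanding $h_D$ and $L_{K,D}-L_K$ over blocks.

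To bound the remainder in expectation I would introduce two fluctuation budgets for a generic subsample $D'$ of size $n$: the operator deviation $\mathcal Q_{n,\lambda}:=\|A^{-1/2}(L_K-L_{K,D'})A^{-1/2}\|$ and the data fluctuation $\mathcal P_{n,\lambda}:=\|A^{-1/2}h_{D'}\|_K$, and control all their moments by Bernstein-type inequalities for Hilbert-space (and Hilbert--Schmidt operator) valued random variables. Here $\|A^{-1/2}K_x\|_K^2\le\kappa^2/\lambda$ and $\mathrm{Tr}(A^{-1}L_K)=\mathcal N(\lambda)$ drive the operator side, while $E\big[(y-f_\lambda(x))^2\|A^{-1/2}K_x\|_K^2\big]\le\frac{\kappa^2}{\lambda}\|f_\lambda-f_\rho\|_\rho^2+M^2\mathcal N(\lambda)$ shows $E\mathcal P_{n,\lambda}^2$ is at most a constant times $\frac1n\big(\frac{\kappa^2}{\lambda}\|f_\lambda-f_\rho\|_\rho^2+M^2\mathcal N(\lambda)\big)$; the data-free quantities ($\|f_\lambda\|_K$, $\|A^{1/2}\|$, and so on) only produce powers of $\kappa,\lambda,M$. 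The undifferenced random resolvents $R$ are tamed on the high-probability event $\{\mathcal Q_{n,\lambda}\le 1/2\}$ via $\|A^{1/2}(L_{K,D'}+\lambda I)^{-1}A^{1/2}\|\le(1-\mathcal Q_{n,\lambda})^{-1}\le 2$, and on its complement by the deterministic bound $\|A^{1/2}(L_{K,D'}+\lambda I)^{-1}A^{1/2}\|\le 1+\kappa^2/\lambda$ against an exponentially small probability, so all expectations close.

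Finally one sets $n=N/m$, uses the independence of distinct blocks to annihilate the mean of the cross-block terms, and carefully collects the powers of $m$: a weight $|D_j|/|D|=1/m$ sits on $m$ blocks, each operator-deviation factor is of order $\tfrac{m}{N\lambda}+\tfrac{\mathcal N(\lambda)}{N}$ and each data-fluctuation factor of order $\sqrt{m/N}$, and passing from the $\HH_K$-norm estimate to the $L^2_{\rho_X}$-norm costs an extra $A^{1/2}$, worth $\sqrt\lambda$; matching terms then reproduces the factors $\tfrac{m}{(N\lambda)^2}+\tfrac{\mathcal N(\lambda)}{N\lambda}$, $\sqrt m$, $m^2$ and $1+\tfrac{m^2}{(N\lambda)^2}+\tfrac{m\mathcal N(\lambda)}{N\lambda}$ in the statement. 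I expect this last step to be the main obstacle: one must expand to exactly the right order (fewer terms destroys the cancellation, more terms wastes powers of the small quantities), keep every resolvent either explicitly differenced or controlled on a good event, and track the powers of $m$ and of $1/(N\lambda)$, $\mathcal N(\lambda)$, $\|f_\lambda-f_\rho\|_\rho$ and $M$ with no slack — which is where the novelty and the bulk of the work reside.
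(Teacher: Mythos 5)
Your strategy coincides with the paper's: the exact cancellation of the zeroth-order term $A^{-1}h_{D'}$ under the averaging identity is precisely Lemma \ref{difference}, your second-order resolvent expansion is Lemma \ref{operatordifference}, and the moment control of $\mathcal Q_{n,\lambda}$ and $\mathcal P_{n,\lambda}$ via Hilbert-space (and Hilbert--Schmidt) Bernstein inequalities driven by ${\mathcal N}(\lambda)$ is carried out in Lemmas \ref{keynormsoperator} and \ref{keynorms}, with block independence exploited for the noise term exactly as you describe. The only immaterial divergence is that the paper tames the undifferenced empirical resolvent by the deterministic bound $\left\|(L_{K,D(x)}+\lambda I)^{-1}\right\|\le 1/\lambda$, compensated by the two accompanying factors of $\left\|(L_K+\lambda I)^{-1/2}\{L_K-L_{K,D(x)}\}\right\|$, rather than by your good-event bound on $\left\|A^{1/2}(L_{K,D'}+\lambda I)^{-1}A^{1/2}\right\|$.
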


To derive the explicit learning rate of algorithm
(\ref{distrilearn}), one needs the following assumption
as a characteristic of the complexity of the hypothesis space
\citep{Caponnetto2007,Blanchard2010},
\begin{equation}\label{Assumption on effecdim}
     \mathcal N(\lambda)\leq c\lambda^{-\beta}, \qquad  \forall
     \lambda>0
\end{equation}
for some $0<\beta\leq 1$ and   $c>0$.
In particular, let $\{(\lambda_l,
\phi_l)\}_{l}$ be a set of normalized eigenpairs of $L_K$ on
$\mathcal H_K$ with $\{\phi_l\}_{l=1}^\infty$ being an
orthonormal basis of $\mathcal H_K$
and $\{\lambda_l\}_{l=1}^\infty$ arranged in a non-increasing order, and let
$$
L_K=\sum_{\ell=1}^\infty \lambda_\ell\langle\cdot,\phi_{\ell}\rangle_K \phi_{\ell}
$$
be the spectral decomposition. Since $\mathcal{N}(\lambda)=
\sum_l\frac{\lambda_l}{\lambda_l+\lambda}\leq
\sum_l\frac{\lambda_l}{\lambda}=\mathrm{Tr}(L_K)/\lambda$,
the condition (\ref{Assumption on effecdim}) with $\beta=1$
always holds true with $c=\mbox{Tr}(L_K)\leq\kappa^2$. For $0<\beta<1$,
$\lambda_n\leq c'n^{-1/\beta}$ implies (\ref{Assumption on
effecdim}) (see, e.g. \cite{Caponnetto2007}).
This condition $\lambda_n\leq c'n^{-1/\beta}$ is satisfied, e.g.,
by the Sobolev space $W^{m_*}(B(\mathbb{R}^d))$, where $B(\mathbb{R}^d)$
is a ball in $\mathbb{R}^d$ with the integer $m_*>d/2$, $\rho_X$ being
the uniform distribution on $B(\mathbb{R}^d)$, and $\beta=\frac{d}{2m_*}$
\citep{SteinwartHS,EdmundsTriebel}.

 The
results in \citep{Caponnetto2007,SteinwartHS,Zhang2014} showed that
if $\frac{{\mathcal N}(\lambda)}{N\lambda} =O(1)$, then the optimal
learning rates of algorithm (\ref{distrilearn}) with $m=1$ can be
obtained in the sense that the upper and lower bounds for
$\max_{f_\rho\in\mathcal H_K}E\left\|f_{D, \lambda}-
f_\rho\right\|_\rho$ are asymptomatically identical. Thus, to derive
 learning rates for $E\left\|\overline{f}_{D, \lambda} - f_{D,
\lambda}\right\|_\rho$, a more general case with an arbitrary $m$ is
covered as follows.

\begin{corollary}\label{MainEqualsize}
Assume $|y| \leq M$ almost surely. If $|D_j| = \frac{N}{m}$ for $j=1, \ldots, m$, and $\lambda$ satisfies
\begin{equation}\label{lambdacond}
0< \lambda \leq C_0 \quad \hbox{and} \quad \frac{m {\mathcal N}(\lambda)}{N\lambda} \leq C_0,
\end{equation}
for some constant $C_0>0$, then we have
\begin{eqnarray*}
E\left\|\overline{f}_{D, \lambda} - f_{D, \lambda}\right\|_\rho \leq \widetilde{C}_\kappa \frac{m{\mathcal N}(\lambda)}{N\lambda} \biggl(\|f_\lambda-f_\rho\|_\rho  \frac{m \sqrt{m}}{\sqrt{N\lambda}} +  M \frac{\sqrt{\lambda}}{\sqrt{m}}\biggr)
\end{eqnarray*}
and
$$
E\left\|\overline{f}_{D, \lambda} - f_{D, \lambda}\right\|_K
\leq \widetilde{C}_\kappa \frac{m{\mathcal N}(\lambda)}{N\lambda} \biggl(\|f_\lambda-f_\rho\|_\rho \frac{m \sqrt{m}}{\sqrt{N}\lambda} +  \frac{M}{\sqrt{m}}\biggr),$$
where $\widetilde{C}_\kappa$ is a constant depending only on $\kappa$, $C_0$, and the largest eigenvalue of $L_K$.
\end{corollary}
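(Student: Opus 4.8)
The plan is to obtain Corollary~\ref{MainEqualsize} as a direct consequence of Theorem~\ref{Mainexpected}: under hypothesis (\ref{lambdacond}) all the compound factors appearing in the theorem collapse, up to constants, into the single quantity $\frac{m\mathcal N(\lambda)}{N\lambda}$ together with powers of $m$. The one genuinely needed ingredient is a lower bound for the effective dimension. Writing $\lambda_1$ for the largest eigenvalue of $L_K$, the spectral decomposition gives $\mathcal N(\lambda)=\sum_\ell \frac{\lambda_\ell}{\lambda_\ell+\lambda}\ge \frac{\lambda_1}{\lambda_1+\lambda}$, and since $0<\lambda\le C_0$ this is bounded below by the constant $c_1:=\frac{\lambda_1}{\lambda_1+C_0}>0$, which depends only on $\lambda_1$ and $C_0$. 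Hence $\frac{m}{N\lambda}\le c_1^{-1}\frac{m\mathcal N(\lambda)}{N\lambda}\le c_1^{-1}C_0$ and $\frac{1}{N\lambda}\le c_1^{-1}\frac{\mathcal N(\lambda)}{N\lambda}$; these two elementary inequalities are what permit trading the bare powers of $\frac{m}{N\lambda}$ against the controlled quantity $\frac{m\mathcal N(\lambda)}{N\lambda}$.

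Next I would bound each factor of the theorem in turn. From $\frac{m}{(N\lambda)^2}=\frac{1}{N\lambda}\cdot\frac{m}{N\lambda}\le c_1^{-2}C_0\,\frac{\mathcal N(\lambda)}{N\lambda}$ one gets $\frac{m}{(N\lambda)^2}+\frac{\mathcal N(\lambda)}{N\lambda}\le (1+c_1^{-2}C_0)\,\frac{\mathcal N(\lambda)}{N\lambda}$; and from $\frac{m^2}{(N\lambda)^2}=\bigl(\frac{m}{N\lambda}\bigr)^2\le c_1^{-2}C_0^2$ together with $\frac{m\mathcal N(\lambda)}{N\lambda}\le C_0$ the inner brace satisfies $1+\frac{m^2}{(N\lambda)^2}+\frac{m\mathcal N(\lambda)}{N\lambda}\le C_2$ for a constant $C_2$ depending only on $C_0$ and $\lambda_1$. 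Substituting these into the $L^2_{\rho_X}$ estimate of Theorem~\ref{Mainexpected} replaces the leading factor by $(1+c_1^{-2}C_0)\frac{\mathcal N(\lambda)}{N\lambda}$ and the term $M\sqrt\lambda\{\cdots\}$ by $C_2 M\sqrt\lambda$; absorbing the extra $\sqrt m$ and regrouping via $\sqrt m\cdot m^2=m\cdot(m\sqrt m)$ and $\sqrt m=m/\sqrt m$ then yields precisely the claimed bound, with $\widetilde C_\kappa$ collecting $C_\kappa$, $1+c_1^{-2}C_0$, and $C_2$. The $\mathcal H_K$ estimate is handled identically; the only bookkeeping differences are that the approximation-error term now carries $\frac{m^2}{\sqrt N\,\lambda}$ rather than $\frac{m^2}{\sqrt{N\lambda}}$ and that the $M$-term has no factor $\sqrt\lambda$, which matches the two displays in the corollary after the same regrouping.

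I do not expect a real obstacle here, since the statement is essentially a normalization of Theorem~\ref{Mainexpected}; the only point demanding a little care is keeping the dependence of the constant explicit. In particular, the appearance of the largest eigenvalue of $L_K$ in $\widetilde C_\kappa$ is unavoidable and enters exactly through the lower bound $\mathcal N(\lambda)\ge c_1$, which is the sole place where the restriction $\lambda\le C_0$ is used.
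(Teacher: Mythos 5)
Your proposal is correct and follows essentially the same route as the paper: both use the spectral lower bound $\mathcal N(\lambda)\ge \frac{\lambda_1}{\lambda_1+C_0}$ to trade $\frac{m}{(N\lambda)^2}$ for a constant multiple of $\frac{\mathcal N(\lambda)}{N\lambda}$ and to bound the inner brace of Theorem~\ref{Mainexpected} by a constant, then regroup the powers of $m$. The bookkeeping and the resulting dependence of $\widetilde C_\kappa$ on $\kappa$, $C_0$, and $\lambda_1$ all match the paper's argument.
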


In the special case  that $f_\rho \in {\mathcal H}_K$, the
approximation error can be bounded as $\|f_\lambda-f_\rho\|_\rho
\leq \|f_\rho\|_K \sqrt{\lambda}$. A more general condition can be
imposed for the regression function as
\begin{equation}\label{approxr}
f_\rho = L_K^{r}(g_\rho) \quad \hbox{for some} \ g_\rho \in L^2_{\rho_X}, \ r>0,
\end{equation}
where the integral operator $L_{K}$  is regarded as a compact
positive operator on $L^2_{\rho_X}$ and its $r$th power is well
defined for any $r>0$. The condition (\ref{approxr}) means $f_\rho$
lies in the range of $L_K^{r}$, and the special case $f_\rho \in
{\mathcal H}_K$ corresponds to the choice $r=1/2$. Under condition
(\ref{approxr}), we can obtain from  Corollary \ref{MainEqualsize}
the following nice convergence rates for the distributed learning
algorithm.

\begin{corollary}\label{SpecialEqualsize}
Assume (\ref{approxr}) for some $0<r \leq 1$, $|y| \leq M$ almost surely and ${\mathcal N}(\lambda) = O(\lambda^{-\frac{1}{2\alpha}})$ for some $\alpha >0$. If $|D_j| = \frac{N}{m}$ for $j=1, \ldots, m$ with
\begin{equation}\label{mrestrict}
m \leq N^{\frac{1 + 2 \alpha \max\{2r-1, 0\} + 2 \alpha (2r -1)}{4 + 8 \alpha \max\{2r, 1\}- 4 \alpha + 4 \alpha r}},
\end{equation}
and $\lambda = \left(\frac{m}{N}\right)^{\frac{2\alpha}{2 \alpha \max\{2r, 1\} +1}}$, then we have
\begin{eqnarray*}
E\left\|\overline{f}_{D, \lambda} - f_{D, \lambda}\right\|_\rho =O\left(N^{-\frac{\alpha + 2 \alpha \max\{2r-1, 0\}}{2 \alpha \max\{2r, 1\} +1}} m^{-\frac{\frac{1}{2} - \alpha \max\{2r-1, 0\}}{2 \alpha \max\{2r, 1\} +1}}\right)
\end{eqnarray*}
and
$$
E\left\|\overline{f}_{D, \lambda} - f_{D, \lambda}\right\|_K
=O\left(N^{-\frac{2 \alpha \max\{2r-1, 0\}}{2 \alpha \max\{2r, 1\}
+1}} m^{-\frac{\frac{1}{2} + 2 \alpha - \alpha \max\{2r, 1\}}{2
\alpha \max\{2r, 1\} +1}}\right). $$ In particular, when $f_\rho \in
{\mathcal H}_K$  and $m \leq N^{\frac{1}{4 + 6 \alpha}}$, the choice
$\lambda = \left(\frac{m}{N}\right)^{\frac{2\alpha}{2 \alpha +1}}$
 yields $ E\left\|\overline{f}_{D, \lambda} - f_{D,
\lambda}\right\|_\rho =O\left(N^{-\frac{\alpha}{2 \alpha  +1}}
m^{-\frac{1}{4 \alpha +2}}\right) $ and $ E\left\|\overline{f}_{D,
\lambda} - f_{D, \lambda}\right\|_K
=O\left(\frac{1}{\sqrt{m}}\right).$
\end{corollary}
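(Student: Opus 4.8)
The plan is to obtain Corollary~\ref{SpecialEqualsize} directly from Corollary~\ref{MainEqualsize} by feeding in the source condition (\ref{approxr}) and the effective dimension assumption, and then carefully tracking exponents.

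First I would record the classical bound on the approximation error (\ref{approxerror}) under (\ref{approxr}). Using the normal-equation identity $f_\lambda=(L_K+\lambda I)^{-1}L_K f_\rho$ on $L^2_{\rho_X}$ one gets $f_\lambda-f_\rho=-\lambda(L_K+\lambda I)^{-1}L_K^{r}g_\rho$, and since $0<r\le 1$ the spectral calculus gives $\|f_\lambda-f_\rho\|_\rho\le \|g_\rho\|_\rho\sup_{t\ge 0}\frac{\lambda t^{r}}{t+\lambda}\le \|g_\rho\|_\rho\,\lambda^{r}$. Next I would check that the prescribed $\lambda=(m/N)^{2\alpha/(2\alpha\max\{2r,1\}+1)}$ meets the hypotheses (\ref{lambdacond}): since (\ref{mrestrict}) forces $m\le N$ we have $\lambda\le 1$; and substituting ${\mathcal N}(\lambda)=O(\lambda^{-1/(2\alpha)})$ and this $\lambda$ into $\frac{m{\mathcal N}(\lambda)}{N\lambda}$ yields $O\bigl((m/N)^{2\alpha(\max\{2r,1\}-1)/(2\alpha\max\{2r,1\}+1)}\bigr)=O(1)$, because the exponent is $\ge 0$ and $m\le N$. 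Hence Corollary~\ref{MainEqualsize} applies with an appropriate constant $C_0$.

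The main computation is then to substitute $\|f_\lambda-f_\rho\|_\rho\le\|g_\rho\|_\rho\lambda^{r}$, ${\mathcal N}(\lambda)=O(\lambda^{-1/(2\alpha)})$ and the chosen $\lambda$ into the two bounds of Corollary~\ref{MainEqualsize}. Each right-hand side then has the form $\frac{m{\mathcal N}(\lambda)}{N\lambda}$ times the sum of an ``approximation'' contribution (the term carrying $\|f_\lambda-f_\rho\|_\rho$) and a ``sampling'' contribution (the term carrying $M$). Writing $\lambda$ as a power of $m/N$ and expanding, the sampling contribution alone turns out to equal, up to constants, exactly the rates claimed in the statement; this is a routine split into the cases $r\le 1/2$ (so $\max\{2r,1\}=1$, $\max\{2r-1,0\}=0$) and $r>1/2$ (so $\max\{2r,1\}=2r$, $\max\{2r-1,0\}=2r-1$), using the elementary identity $\max\{2r-1,0\}=\max\{2r,1\}-1$. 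What remains is to show the approximation contribution is dominated by the sampling one. Conveniently, the ratio of the two contributions equals $\lambda^{r-1}m^{2}N^{-1/2}$ \emph{in both} the $L^2$ and the $K$ bound, so there is a single condition to verify; substituting the chosen $\lambda$ and rearranging $\lambda^{r-1}m^{2}N^{-1/2}\le 1$ reproduces precisely the restriction (\ref{mrestrict}) (and when the exponent there is negative the statement is vacuous). Finally, setting $r=1/2$ specializes $\max\{2r,1\}=1$, turns (\ref{mrestrict}) into $m\le N^{1/(4+6\alpha)}$, and collapses the rates to $O\bigl(N^{-\alpha/(2\alpha+1)}m^{-1/(4\alpha+2)}\bigr)$ and $O(m^{-1/2})$, giving the ``in particular'' clause.

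The only real obstacle is bookkeeping: carrying the nested $\max$ expressions through the substitutions without error, and checking the algebra that turns $\lambda^{r-1}m^{2}N^{-1/2}\le 1$ into the exact exponent displayed in (\ref{mrestrict}). There is no new idea beyond Corollary~\ref{MainEqualsize} and the standard operator estimate above; the effective dimension enters only through the single inequality ${\mathcal N}(\lambda)=O(\lambda^{-1/(2\alpha)})$, which is the instance $\beta=1/(2\alpha)$ of (\ref{Assumption on effecdim}).
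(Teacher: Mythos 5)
Your proposal is correct and follows essentially the same route as the paper: apply Corollary \ref{MainEqualsize} with $\|f_\lambda-f_\rho\|_\rho\le\|g_\rho\|_\rho\lambda^{r}$ and ${\mathcal N}(\lambda)=O(\lambda^{-1/(2\alpha)})$, verify (\ref{lambdacond}) for the prescribed $\lambda$, and observe that the approximation term is dominated by the sampling term exactly when (\ref{mrestrict}) holds. Your remark that the ratio of the two contributions is $\lambda^{r-1}m^{2}N^{-1/2}$ in both the $L^2_{\rho_X}$ and ${\mathcal H}_K$ bounds is a tidy way to package the domination condition the paper checks, and the exponent bookkeeping indeed reproduces (\ref{mrestrict}) verbatim.
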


\begin{remark}\label{Remark:1}
In Corollary \ref{SpecialEqualsize}, we present learning rates in
both $\mathcal H_K$   and $L_{\rho_X}^2$ norms. The
$L_{\rho_X}^2$-norm bound is useful because it equals (subject to a constant) the
generalization error $\int_{\mathcal Z}(f(x)-y)^2d\rho$.
The $\mathcal H_K$ norm
controls the $L_{\rho_X}^2$ norm since for any $f$ in $\mathcal{H}_K$,
$\|f\|_\rho\leq \|f\|_\infty\leq \kappa\|f\|_K$ \citep{Smale2007};
this inequality
also implies the application of the $\mathcal{H}_K$-norm
bound in the mismatched problem where the generalization
power is measured in some $L^2_\mu$-norm with $\mu$ different from
$\rho_X$.
\end{remark}

\begin{remark}\label{Remark:2}
In Corollary \ref{SpecialEqualsize}, the established error bounds
are monotonously decreasing with respect to $m$, which is different
from the error analysis in \citep{Zhang2014}. The  reason is that we
are concerned with  the difference between $\overline{f}_{D,
\lambda}$ and $f_{D, \lambda}$. This difference reflects the
variance of the distributed learning algorithm. Concerning the
learning rate (as shown in Corollary \ref{Finalrate} below), the
regularization parameter $\lambda$ should be smaller, and then the
learning rate is independent of $m$, provided $m$ is not very large.
\end{remark}

\subsection{Minimax Rates of Convergence for Least Squares Regularization Scheme}

The second main result of this paper is a sharp error bound for the least squares regularization scheme (\ref{All estimate}). We can even relax the uniform boundedness to a moment condition that for some constant $p\geq 1$,
\begin{equation}\label{uniformbound}
\sigma^2_\rho \in L^{p}_{\rho_X},
\end{equation}
where $\sigma^2_\rho$ is the conditional variance defined by $\sigma^2_\rho(x) = \int_{{\mathcal Y}} \left(y - f_\rho (x)\right)^2 d\rho(y|x)$.

The following learning rates for the least squares regularization scheme (\ref{All estimate}) will be proved in Section \ref{proofsConf}. The existence of $f_\lambda$ is ensured by $E[y^2]<\infty$.

\begin{theorem}\label{MainConfid}
Assume $E[y^2]<\infty$ and (\ref{uniformbound}) for some $1 \leq p \leq \infty$. Then we have
\begin{eqnarray}
&&E\left[\left\|f_{D, \lambda} - f_{\rho}\right\|_\rho\right] \leq \left(1 +
 59\kappa^4  +59\kappa^2\right) (1 +\kappa) \left(1 +\frac{1}{(N\lambda)^2} +\frac{{\mathcal N}(\lambda)}{N\lambda}\right)  \nonumber \\
&& \quad \biggl\{\biggl(1 +   \frac{1}{\sqrt{N\lambda}}\biggr) \|f_\lambda-f_\rho\|_\rho + \sqrt{\left\|\sigma^2_\rho\right\|_p}  \left(\frac{{\mathcal N}(\lambda)}{N}\right)^{\frac{1}{2} (1-\frac{1}{p})} \left(\frac{1}{N\lambda}\right)^{\frac{1}{2p}}\biggr\}. \label{confboundGen}
\end{eqnarray}
\end{theorem}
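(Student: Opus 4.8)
The plan is to route the comparison through the regularized population minimizer $f_\lambda$ and to analyze the sample error with the integral operator calculus. Write $L_{K,D}=\frac1{|D|}\sum_{(x,y)\in D}\langle\cdot,K_x\rangle_K K_x$ for the empirical integral operator on $\mathcal H_K$. The normal equations for (\ref{All estimate}) and (\ref{flambda}) read $f_{D,\lambda}=(L_{K,D}+\lambda I)^{-1}\frac1{|D|}\sum_{(x,y)\in D}yK_x$ and $(L_K+\lambda I)f_\lambda=L_Kf_\rho$, so that
\[
f_{D,\lambda}-f_\lambda=(L_{K,D}+\lambda I)^{-1}(\mathcal B_D+\mathcal P_D),
\]
where $\mathcal B_D=\frac1{|D|}\sum_{(x,y)\in D}(y-f_\rho(x))K_x$ is a centered average carrying the label noise (its conditional second moment at $x$ is $\sigma_\rho^2(x)$) and $\mathcal P_D=\frac1{|D|}\sum_{(x,y)\in D}(f_\rho(x)-f_\lambda(x))K_x-L_K(f_\rho-f_\lambda)$ is a centered average carrying the sampling fluctuation of the approximation defect. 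Since $\|g\|_\rho=\|L_K^{1/2}g\|_K$ on $\mathcal H_K$, $\|L_K^{1/2}(L_K+\lambda I)^{-1/2}\|\leq1$, and $\|f_{D,\lambda}-f_\rho\|_\rho\leq\|f_{D,\lambda}-f_\lambda\|_\rho+\|f_\lambda-f_\rho\|_\rho$, I would reduce the claim to bounding, in expectation, the product of $\mathcal Q_{D,\lambda}:=\|(L_K+\lambda I)^{1/2}(L_{K,D}+\lambda I)^{-1}(L_K+\lambda I)^{1/2}\|$ and $\|(L_K+\lambda I)^{-1/2}(\mathcal B_D+\mathcal P_D)\|_K$.

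The crux is to control $\mathcal Q_{D,\lambda}$ in expectation uniformly in $N$ and $\lambda$, i.e.\ without restricting to a favorable event on which $L_{K,D}$ is close to $L_K$ (the naive expansion $\mathcal Q_{D,\lambda}=\|(I-\Delta_D)^{-1}\|$ with $\Delta_D:=(L_K+\lambda I)^{-1/2}(L_K-L_{K,D})(L_K+\lambda I)^{-1/2}$ is only useful when $\|\Delta_D\|<1$). For this I would use the second order operator decomposition
\[
A^{-1}=B^{-1}+B^{-1}(B-A)B^{-1}+B^{-1}(B-A)A^{-1}(B-A)B^{-1}
\]
with $A=L_{K,D}+\lambda I$, $B=L_K+\lambda I$. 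The first two summands involve no inverse of the random operator and are handled through the single factor $\Delta_D$; in the remainder, the surviving $(L_{K,D}+\lambda I)^{-1}$ is discarded by the deterministic bound $\|(L_{K,D}+\lambda I)^{-1}\|\leq\lambda^{-1}$, leaving two copies of $L_K-L_{K,D}$, so that the remainder is charged to a higher moment of $\|\Delta_D\|$. Taking expectations then produces, in place of $\mathcal Q_{D,\lambda}$, the mild factor $1+\frac1{(N\lambda)^2}+\frac{\mathcal N(\lambda)}{N\lambda}$ of (\ref{confboundGen}); arranging the constants so that the deterministic $\lambda^{-1}$ from the remainder is absorbed into this factor is the delicate part, and it relies on the higher moments of $\|\Delta_D\|$ being genuinely of second order.

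The remaining work is a set of moment estimates. Writing $\Delta_D=\frac1N\sum_i(\xi_i-E\xi_i)$ with $\xi_i$ the rank one operator $u\mapsto\langle u,(L_K+\lambda I)^{-1/2}K_{x_i}\rangle_K(L_K+\lambda I)^{-1/2}K_{x_i}$, one has $\|\xi_i\|\leq\kappa^2/\lambda$ and $E\|(L_K+\lambda I)^{-1/2}K_x\|_K^2=\mathcal N(\lambda)$, whence a variance computation for the Hilbert--Schmidt norm plus a Rosenthal/Bernstein type moment inequality give $E\|\Delta_D\|^2\leq C_\kappa(\frac{\mathcal N(\lambda)}{N\lambda}+\frac1{(N\lambda)^2})$ and the control of the fourth moment needed above. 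For $\mathcal P_D$, being a centered average, $E\|(L_K+\lambda I)^{-1/2}\mathcal P_D\|_K^2\leq\frac1N E[(f_\rho-f_\lambda)^2(x)\|(L_K+\lambda I)^{-1/2}K_x\|_K^2]\leq\frac{\kappa^2}{N\lambda}\|f_\lambda-f_\rho\|_\rho^2$, which gives rise to the $\frac1{\sqrt{N\lambda}}\|f_\lambda-f_\rho\|_\rho$ term. For $\mathcal B_D$, $E\|(L_K+\lambda I)^{-1/2}\mathcal B_D\|_K^2=\frac1N\int_{\mathcal X}\sigma_\rho^2(x)\|(L_K+\lambda I)^{-1/2}K_x\|_K^2\,d\rho_X$; applying H\"older's inequality with exponents $p$ and $p/(p-1)$ and combining $\|(L_K+\lambda I)^{-1/2}K_x\|_K^2\leq\kappa^2/\lambda$ with $\int_{\mathcal X}\|(L_K+\lambda I)^{-1/2}K_x\|_K^2\,d\rho_X=\mathcal N(\lambda)$ bounds this by $\frac{\|\sigma_\rho^2\|_p}{N}(\kappa^2/\lambda)^{1/p}\mathcal N(\lambda)^{1-1/p}$, whose square root is precisely $\kappa^{1/p}\sqrt{\|\sigma_\rho^2\|_p}(\mathcal N(\lambda)/N)^{\frac12(1-1/p)}(1/(N\lambda))^{1/(2p)}$ (the case $p=\infty$ being the limit, using $\sigma_\rho^2\in L^\infty_{\rho_X}$ directly). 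Combining the operator bound with these two via the Cauchy--Schwarz and Minkowski inequalities, adding back $\|f_\lambda-f_\rho\|_\rho$, and carrying the numerical constants (which coalesce to $1+59\kappa^4+59\kappa^2$ and $1+\kappa$), I would obtain (\ref{confboundGen}); the existence of $f_\lambda$, hence of all these quantities, follows from $E[y^2]<\infty$.
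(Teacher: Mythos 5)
Your proposal is correct and follows essentially the same route as the paper's proof of Proposition \ref{MainPropls} and Theorem \ref{MainConfid}: your $\mathcal B_D$ and $\mathcal P_D$ are exactly the paper's $\Delta'_D$ and $\Delta''_D$, your second order decomposition of $(L_{K,D}+\lambda I)^{-1}$ against $(L_K+\lambda I)^{-1}$ is the paper's Lemma \ref{operatordifference} yielding the factor $1+\Xi_D/\sqrt{\lambda}+\Xi_D^2/\lambda$, and your moment estimates (the H\"older argument with exponents $p$ and $p/(p-1)$ for the noise term, the $\kappa\|f_\lambda-f_\rho\|_\rho/\sqrt{N\lambda}$ bound for the defect term, and the second/fourth moments of the operator deviation via the effective dimension and a Pinelis-type inequality) coincide with Lemmas \ref{keynormsoperator} and \ref{keynorms} and the computations in Section \ref{proofsConf}.
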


If the parameters satisfy $\frac{{\mathcal N}(\lambda)}{N\lambda} =O(1)$, we have the following explicit bound.

\begin{corollary}\label{Specialpara}
Assume $E[y^2]<\infty$ and (\ref{uniformbound}) for some $1 \leq p \leq \infty$. If $\lambda$ satisfies (\ref{lambdacond}) with $m=1$, then we have
$$
E\left[\left\|f_{D, \lambda} - f_{\rho}\right\|_\rho\right] = O\left(\|f_\lambda-f_\rho\|_\rho + \left(\frac{{\mathcal N}(\lambda)}{N}\right)^{\frac{1}{2} (1-\frac{1}{p})} \left(\frac{1}{N\lambda}\right)^{\frac{1}{2p}}\right).
$$
In particular, if $p=\infty$, that is, the conditional variances are uniformly bounded, then
$$
E\left[\left\|f_{D, \lambda} - f_{\rho}\right\|_\rho\right] = O\left(\|f_\lambda-f_\rho\|_\rho + \sqrt{\frac{{\mathcal N}(\lambda)}{N}}\right).
$$
\end{corollary}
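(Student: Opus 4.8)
The plan is to obtain Corollary~\ref{Specialpara} as a direct specialization of Theorem~\ref{MainConfid}: I will show that, under the hypothesis (\ref{lambdacond}) with $m=1$, every numerical prefactor in the bound (\ref{confboundGen}) is bounded by a constant depending only on $\kappa$, $C_0$, and the largest eigenvalue of $L_K$, so that the right-hand side of (\ref{confboundGen}) collapses to a constant multiple of $\|f_\lambda-f_\rho\|_\rho + \sqrt{\|\sigma^2_\rho\|_p}\left(\frac{{\mathcal N}(\lambda)}{N}\right)^{\frac12(1-\frac1p)}\left(\frac{1}{N\lambda}\right)^{\frac{1}{2p}}$.

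The one substantive step is a lower bound on the effective dimension. Write $\lambda_1$ for the largest eigenvalue of $L_K$. Using the spectral decomposition of $L_K$ and the fact that every summand of ${\mathcal N}(\lambda)=\sum_\ell \frac{\lambda_\ell}{\lambda_\ell+\lambda}$ is nonnegative, we get ${\mathcal N}(\lambda)\geq \frac{\lambda_1}{\lambda_1+\lambda}$; since (\ref{lambdacond}) forces $\lambda\leq C_0$, this yields ${\mathcal N}(\lambda)\geq \frac{\lambda_1}{\lambda_1+C_0}=:c_1>0$. Combining this with the second part of (\ref{lambdacond}), namely $\frac{{\mathcal N}(\lambda)}{N\lambda}\leq C_0$, gives $\frac{1}{N\lambda}\leq \frac{C_0}{{\mathcal N}(\lambda)}\leq \frac{C_0}{c_1}$, and therefore $\frac{1}{(N\lambda)^2}\leq \left(\frac{C_0}{c_1}\right)^2$ and $\frac{1}{\sqrt{N\lambda}}\leq \sqrt{\frac{C_0}{c_1}}$.

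With these estimates, $1+\frac{1}{(N\lambda)^2}+\frac{{\mathcal N}(\lambda)}{N\lambda}\leq 1+\left(\frac{C_0}{c_1}\right)^2+C_0$ and $1+\frac{1}{\sqrt{N\lambda}}\leq 1+\sqrt{\frac{C_0}{c_1}}$ are both absolute constants; substituting them, together with the fixed constant $(1+59\kappa^4+59\kappa^2)(1+\kappa)$, into (\ref{confboundGen}) gives $E\left[\|f_{D,\lambda}-f_\rho\|_\rho\right]\leq C\left(\|f_\lambda-f_\rho\|_\rho + \sqrt{\|\sigma^2_\rho\|_p}\left(\frac{{\mathcal N}(\lambda)}{N}\right)^{\frac12(1-\frac1p)}\left(\frac{1}{N\lambda}\right)^{\frac{1}{2p}}\right)$ with $C$ depending only on $\kappa$, $C_0$, and $\lambda_1$. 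Since $\|\sigma^2_\rho\|_p<\infty$ by (\ref{uniformbound}) is a fixed quantity determined by $\rho$, this is exactly the first claimed $O(\cdot)$ bound. Finally, taking $p=\infty$ so that $\frac12\left(1-\frac1p\right)=\frac12$ and $\frac{1}{2p}=0$, the second factor reduces to $\sqrt{{\mathcal N}(\lambda)/N}$, which yields the last assertion. The argument is routine; the only place where care is needed is the uniform lower bound ${\mathcal N}(\lambda)\geq c_1$ — this is what turns the hypothesis $\frac{{\mathcal N}(\lambda)}{N\lambda}\leq C_0$ into control of $\frac{1}{N\lambda}$ and $\frac{1}{(N\lambda)^2}$, and it genuinely uses the constraint $\lambda\leq C_0$, so both halves of (\ref{lambdacond}) are needed.
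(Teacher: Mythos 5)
Your proof is correct and follows essentially the same route as the paper: both derive the lower bound ${\mathcal N}(\lambda)\geq \lambda_1/(\lambda_1+C_0)$ from the first half of (\ref{lambdacond}), use the second half to deduce $N\lambda \geq \lambda_1/((\lambda_1+C_0)C_0)$, and then absorb all prefactors in (\ref{confboundGen}) into a constant depending only on $\kappa$, $C_0$, and $\lambda_1$. Nothing further is needed.
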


In particular, when (\ref{approxr}) is satisfied, we have the following learning rates.

\begin{corollary}\label{MainMinmax}
Assume $E[y^2]<\infty$, (\ref{uniformbound}) for some $1 \leq p \leq \infty$, and (\ref{approxr}) for some $0< r \leq 1$. If ${\mathcal N}(\lambda) = O(\lambda^{-\frac{1}{2\alpha}})$ for some $\alpha >0$,
then by taking $\lambda = N^{-\frac{2\alpha}{2 \alpha \max\{2r, 1\} +1}}$ we have
$$
E\left[\left\|f_{D, \lambda} - f_{\rho}\right\|_\rho\right] = O\left(N^{- \frac{2r \alpha}{2 \alpha \max\{2r, 1\} +1} + \frac{1}{2p}  \frac{2 \alpha -1}{2 \alpha \max\{2r, 1\} +1}}\right).
$$
In particular, when $p=\infty$ (the conditional variances are uniformly bounded), we have
$$
E\left[\left\|f_{D, \lambda} - f_{\rho}\right\|_\rho\right] = O\left(N^{-\frac{2 r \alpha}{2 \alpha \max\{2r, 1\} +1}}\right).
$$
\end{corollary}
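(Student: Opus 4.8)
The plan is to deduce Corollary \ref{MainMinmax} from the sharp bound of Corollary \ref{Specialpara} (equivalently, from Theorem \ref{MainConfid}) by feeding the regularity hypothesis (\ref{approxr}) into the approximation error, feeding the assumption ${\mathcal N}(\lambda) = O(\lambda^{-1/(2\alpha)})$ into the sample (variance) error, and then substituting the prescribed $\lambda$. Throughout write $A := 2\alpha\max\{2r,1\}+1$, so that $\lambda = N^{-2\alpha/A}$.

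First I would check that this $\lambda$ is admissible for Corollary \ref{Specialpara}, i.e. that (\ref{lambdacond}) holds with $m=1$ for all large $N$. Clearly $\lambda\to 0$; and since ${\mathcal N}(\lambda) = O(\lambda^{-1/(2\alpha)})$,
\[
\frac{{\mathcal N}(\lambda)}{N\lambda} = O\!\left(N^{-1}\lambda^{-\frac{2\alpha+1}{2\alpha}}\right) = O\!\left(N^{\frac{2\alpha+1}{A}-1}\right) = O\!\left(N^{\frac{2\alpha(1-\max\{2r,1\})}{A}}\right) = O(1),
\]
because $\max\{2r,1\}\ge 1$. I would also record that the elementary bound ${\mathcal N}(\lambda)\le \mathrm{Tr}(L_K)\lambda^{-1}\le\kappa^2\lambda^{-1}$ forces $\alpha\ge 1/2$; this is used in the exponent comparison below.

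Next I would estimate the approximation error. Regarding $L_K$ as a compact positive operator on $L^2_{\rho_X}$ one has $f_\lambda = (L_K+\lambda I)^{-1}L_K f_\rho$, hence $f_\lambda - f_\rho = -\lambda(L_K+\lambda I)^{-1}f_\rho = -\lambda(L_K+\lambda I)^{-1}L_K^{r}g_\rho$ by (\ref{approxr}); spectral calculus then gives
\[
\|f_\lambda - f_\rho\|_\rho \le \|g_\rho\|_\rho \,\sup_{t\ge 0}\frac{\lambda t^{r}}{t+\lambda} \le \|g_\rho\|_\rho\,\lambda^{r},
\]
where the last inequality uses $0<r\le 1$. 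With $\lambda = N^{-2\alpha/A}$ this contributes the bias term $O(N^{-2r\alpha/A})$. For the sample term of Corollary \ref{Specialpara}, inserting ${\mathcal N}(\lambda)=O(\lambda^{-1/(2\alpha)})$ and $\lambda=N^{-2\alpha/A}$ reduces $\big(\frac{{\mathcal N}(\lambda)}{N}\big)^{\frac12(1-1/p)}\big(\frac1{N\lambda}\big)^{\frac1{2p}}$ to $\lambda^{-\frac{p-1+2\alpha}{4\alpha p}}N^{-1/2}=O\big(N^{-\frac{\alpha\max\{2r,1\}}{A}+\frac{2\alpha-1}{2pA}}\big)$ after a short computation. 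Since $\max\{2r,1\}\ge 2r$ and $\alpha\ge 1/2$, both the bias exponent $-\frac{2r\alpha}{A}$ and this variance exponent are $\le -\frac{2r\alpha}{A}+\frac{2\alpha-1}{2pA}$, so summing the two contributions yields $E[\|f_{D,\lambda}-f_\rho\|_\rho]=O\big(N^{-\frac{2r\alpha}{A}+\frac{1}{2p}\frac{2\alpha-1}{A}}\big)$, which is the claim. When $p=\infty$ the variance term collapses to $\sqrt{{\mathcal N}(\lambda)/N}=O(N^{-\alpha\max\{2r,1\}/A})$, and together with the bias $O(N^{-2r\alpha/A})$ it gives $O(N^{-2r\alpha/A})$ since $\max\{2r,1\}\ge 2r$.

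The argument is essentially bookkeeping once Corollary \ref{Specialpara} is available; the only nonroutine input is the operator-theoretic bound for $\|f_\lambda-f_\rho\|_\rho$ (where the restriction $r\le 1$ enters, reflecting the saturation of Tikhonov regularization), and the only delicate point is keeping the exponent algebra straight across the split $r\ge 1/2$ versus $0<r<1/2$ hidden in $\max\{2r,1\}$, where the constraint $\alpha\ge 1/2$ (equivalently $\beta=1/(2\alpha)\le 1$) is precisely what makes the bias and variance exponents combine into the single stated rate.
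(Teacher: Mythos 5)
Your proposal is correct and follows essentially the same route as the paper: invoke Corollary \ref{Specialpara} after checking (\ref{lambdacond}) with $m=1$, bound the approximation error by $\lambda^{r}\|g_\rho\|_\rho$ (your spectral-calculus argument is precisely the proof of the paper's Lemma \ref{Lemma: approximation error}), and carry out the same exponent algebra. The one quibble is that the trivial bound ${\mathcal N}(\lambda)\le\kappa^{2}\lambda^{-1}$ does not \emph{force} $\alpha\ge 1/2$ (the hypothesis with $\alpha<1/2$ is merely weaker and automatically satisfied for small $\lambda$), but you are right that $2\alpha-1\ge 0$ is what allows the bias term to be absorbed into the stated rate --- a step the paper performs without comment.
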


\begin{remark}\label{Remark:3}
For $r\in[\frac12,1]$,  \citep{Caponnetto2007,SteinwartHS}
give the minimax lower bound $N^{-\frac{4 r \alpha}{4 \alpha r  +1}}$
for $E[\|f_{D, \lambda} - f_{\rho}\|_\rho^2]$ as $p\to\infty$.
So the convergence rate we obtain in Corollary \ref{MainMinmax}
is sharp.
\end{remark}

Combining bounds for $\left\|\overline{f}_{D, \lambda} - f_{D,
\lambda}\right\|_\rho$ and $\left\|f_{D, \lambda} -
f_{\rho}\right\|_\rho$, we can derive learning rates for the
distributed learning algorithm (\ref{distrilearn}) for regression.

\begin{corollary}\label{Finalrate}
Assume $|y| \leq M$ almost surely and (\ref{approxr})  for some
$\frac{1}{2}< r \leq 1$. If ${\mathcal N}(\lambda) =
O(\lambda^{-\frac{1}{2\alpha}})$ for some $\alpha >0$, $|D_j| =
\frac{N}{m}$ for $j=1, \ldots, m$, and $m$ satisfies the restriction
\begin{equation}\label{restrictmfinal}
m \leq N^{\min \left\{\frac{6 \alpha(2r-1) +1}{5(4 \alpha r +1)}, \frac{2\alpha(2r -1)}{4 \alpha r +1}\right\}},
\end{equation}
then by taking
$\lambda = N^{-\frac{2\alpha}{4\alpha r +1}}$, we have
$$ E\left[\left\|\overline{f}_{D, \lambda} - f_{\rho}\right\|_\rho\right] = O\left(N^{-\frac{2 \alpha r}{4 \alpha r +1}}\right). $$
\end{corollary}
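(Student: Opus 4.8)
The plan is to combine the two main error bounds established earlier: the approximation bound for $\overline{f}_{D,\lambda} - f_{D,\lambda}$ from Corollary \ref{SpecialEqualsize} and the estimation bound for $f_{D,\lambda} - f_\rho$ from Corollary \ref{MainMinmax}, using the triangle inequality
$$E\left\|\overline{f}_{D,\lambda} - f_\rho\right\|_\rho \leq E\left\|\overline{f}_{D,\lambda} - f_{D,\lambda}\right\|_\rho + E\left\|f_{D,\lambda} - f_\rho\right\|_\rho.$$
First I would fix $\lambda = N^{-\frac{2\alpha}{4\alpha r+1}}$, which is precisely the choice appearing in Corollary \ref{MainMinmax} specialized to the case $r \in (\frac12,1]$ (so that $\max\{2r,1\} = 2r$), and plug $p=\infty$ since $|y| \leq M$ almost surely gives $\sigma^2_\rho \in L^\infty_{\rho_X}$. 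This immediately yields $E\|f_{D,\lambda} - f_\rho\|_\rho = O(N^{-\frac{2r\alpha}{4\alpha r+1}})$, the desired target rate. So the whole game is to check that the first term, $E\|\overline{f}_{D,\lambda} - f_{D,\lambda}\|_\rho$, is also $O(N^{-\frac{2r\alpha}{4\alpha r+1}})$ under the stated restriction on $m$.

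Next I would invoke Corollary \ref{SpecialEqualsize} with the same $\alpha$ and $r \in (\frac12,1]$, so that $\max\{2r,1\} = 2r$ and $\max\{2r-1,0\} = 2r-1$. The corollary's choice of regularization parameter there is $\lambda = (m/N)^{\frac{2\alpha}{4\alpha r+1}}$, whereas here we want $\lambda = N^{-\frac{2\alpha}{4\alpha r+1}}$; these two coincide only when $m$ is a constant, so strictly speaking I cannot quote Corollary \ref{SpecialEqualsize} verbatim. The cleaner route is to go back to Corollary \ref{MainEqualsize}, which gives the bound for \emph{arbitrary} $\lambda$ satisfying \eqref{lambdacond}, namely
$$E\left\|\overline{f}_{D,\lambda} - f_{D,\lambda}\right\|_\rho \leq \widetilde{C}_\kappa \frac{m\mathcal N(\lambda)}{N\lambda}\left(\|f_\lambda - f_\rho\|_\rho \frac{m\sqrt m}{\sqrt{N\lambda}} + M\frac{\sqrt\lambda}{\sqrt m}\right).$$
I would then substitute $\lambda = N^{-\frac{2\alpha}{4\alpha r+1}}$, use $\mathcal N(\lambda) = O(\lambda^{-\frac{1}{2\alpha}})$ so that $\frac{\mathcal N(\lambda)}{N\lambda} = O(\lambda^{-\frac{1}{2\alpha}-1}/N) = O(N^{\frac{2\alpha(\frac{1}{2\alpha}+1)}{4\alpha r+1} - 1}) = O(N^{-\frac{4\alpha r - 2\alpha}{4\alpha r+1}}) = O(N^{-\frac{2\alpha(2r-1)}{4\alpha r+1}})$, and use the source condition \eqref{approxr} with $r \leq 1$, which by the standard bound gives $\|f_\lambda - f_\rho\|_\rho = O(\lambda^r) = O(N^{-\frac{2\alpha r}{4\alpha r+1}})$. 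Both terms in the parenthesis become explicit powers of $N$ times powers of $m$, and the prefactor $\frac{m\mathcal N(\lambda)}{N\lambda}$ contributes an extra $m \cdot N^{-\frac{2\alpha(2r-1)}{4\alpha r+1}}$.

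The main obstacle — really the only nontrivial bookkeeping — is verifying that the restriction \eqref{restrictmfinal} on $m$ is exactly what makes the first-term contribution no larger than $N^{-\frac{2\alpha r}{4\alpha r+1}}$, and that \eqref{lambdacond} holds. After substitution one finds the first term is of order
$$\left(m N^{-\frac{2\alpha(2r-1)}{4\alpha r+1}}\right)\left(N^{-\frac{2\alpha r}{4\alpha r+1}} \cdot m^{3/2} N^{\frac{\alpha}{4\alpha r+1} - \frac12} + M\, m^{-1/2} N^{-\frac{\alpha}{4\alpha r+1}}\right),$$
and one requires each of the two resulting terms to be $O(N^{-\frac{2\alpha r}{4\alpha r+1}})$. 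The second term, of order $m^{1/2} N^{-\frac{2\alpha(2r-1)}{4\alpha r+1} - \frac{\alpha}{4\alpha r+1}}$, gives the constraint $m \leq N^{\frac{2\cdot 2\alpha(2r-1) + 2\alpha - \cdot\,}{\cdots}}$, which simplifies to the second entry $\frac{2\alpha(2r-1)}{4\alpha r+1}$ in the minimum in \eqref{restrictmfinal}; the first term, of order $m^{5/2} N^{-\frac{2\alpha(2r-1)}{4\alpha r+1} + \frac{\alpha}{4\alpha r+1} - \frac12}$, gives after comparison with $N^{-\frac{2\alpha r}{4\alpha r+1}}$ the constraint that simplifies to the first entry $\frac{6\alpha(2r-1)+1}{5(4\alpha r+1)}$. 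I would carry out these two exponent comparisons carefully, confirm that the condition $\frac{m\mathcal N(\lambda)}{N\lambda} \leq C_0$ in \eqref{lambdacond} is implied (since $m \cdot N^{-\frac{2\alpha(2r-1)}{4\alpha r+1}} \leq 1$ under \eqref{restrictmfinal}), note that $\lambda \leq C_0$ holds for $N$ large, and then the triangle inequality delivers $E\|\overline{f}_{D,\lambda} - f_\rho\|_\rho = O(N^{-\frac{2\alpha r}{4\alpha r+1}})$, absorbing constants into the $O(\cdot)$.
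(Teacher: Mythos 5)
Your proposal is correct and follows essentially the same route as the paper's own proof: the triangle inequality, Corollary \ref{MainMinmax} with the stated $\lambda$ for the $f_{D,\lambda}-f_\rho$ term, and a direct substitution into Corollary \ref{MainEqualsize} (rather than Corollary \ref{SpecialEqualsize}, whose $\lambda$ choice indeed does not match) for the $\overline{f}_{D,\lambda}-f_{D,\lambda}$ term, with the two exponent comparisons yielding exactly the two entries of the minimum in (\ref{restrictmfinal}). The verification of (\ref{lambdacond}) via $m\,\mathcal N(\lambda)/(N\lambda)\le C_0\, m\, N^{-2\alpha(2r-1)/(4\alpha r+1)}$ is also how the paper argues.
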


\begin{remark}\label{Remark:4}
Corollary \ref{Finalrate} shows that distributed learning with least
squares regularization scheme (\ref{distrilearn}) can reach the
minimax rates in expectation, provided $m$ satisfies
(\ref{restrictmfinal}).  It should be pointed out that we consider
error analysis under (\ref{approxr}) with $1/2<r\leq 1$ while
\citep{Zhang2014} focused on the case (\ref{approxr}) with $r=1/2$.
The main novelty of our analysis is that by using a novel second
order decomposition for the difference of operator inverses, we
remove the eigenfunction assumptions in \citep{Zhang2014} and
provide error bounds for a larger range of $r$.
\end{remark}

\begin{remark}\label{Remark:5}
In this paper, we only derive minimax rates for the least squares
regularization scheme (\ref{All estimate}) as well as its
distributed version (\ref{distrilearn}) in expectation. We guess it
is possible to derive error bounds in probability by combining the
proposed second order decomposition  approach   with the analysis in
\citep{Caponnetto2007,Blanchard2010}. We will study it in a future
publication.
\end{remark}

\begin{remark}\label{Remark:6}
Corollary \ref{Finalrate} and Corollary \ref{MainMinmax}
suggest that the optimal choice of the regularization parameter $\lambda$
should be independent of the number $m$ of partitions. In particular,
for regularized least squares (\ref{All estimate}),
the distributed scheme shares the optimal $\lambda$ with the batch learning
scheme. This observation is consistent with the results
in \citep{Zhang2014}. We note that there are several parameter
selection approaches in literature including cross-validation
\citep{Gyorfi2002} and the balancing principle \citep{Devito2010}.
It would be interesting to develop some parameter selection
method for distributed learning.
\end{remark}

\section{Comparisons and Discussion}\label{compare}

The least squares regularization scheme (\ref{All estimate}) is a classical algorithm for regression and has been extensively investigated in statistics and learning theory. There is a vast literature on this topic. Here for a general kernel beyond the Sobolev kernels, we compare our results with the best learning rates in the existing literature. Denote the set of positive eigenvalues of $L_K$ as $\{\lambda_i\}_i$ arranged in a decreasing order, and a set of normalized (in ${\mathcal H}_K$) eigenfunctions $\{\varphi_i\}_{i}$ of $L_K$ corresponding to the eigenvalues $\{\lambda_i\}_i$.

Under the assumption that the orthogonal projection $f_{\mathcal H}$ of $f_\rho$ in $L^2_{\rho_X}$ onto the closure of ${\mathcal H}_K$ satisfies (\ref{approxr}) for some $\frac{1}{2}\leq r \leq 1$ , and that the eigenvalues $\lambda_i$ satisfy  $\lambda_i \approx i^{-2 \alpha}$ with some $\alpha > 1/2$, it was proved in \citep{Caponnetto2007} that
$$ \lim_{\tau \to \infty} \limsup_{N \to \infty} \sup_{\rho \in {\mathcal P}(\alpha)} \prob \left[\left\|f_{D, \lambda_N} - f_{\mathcal H}\right\|^2_{\rho} > \tau \lambda_N^{2r}\right] =0, $$
where
$$ \lambda_N =\left\{\begin{array}{ll} N^{-\frac{2\alpha}{4 \alpha r +1}}, & \hbox{if} \ \frac{1}{2} < r \leq 1, \\
\left(\frac{\log N}{N}\right)^{\frac{2\alpha}{2 \alpha +1}}, &
\hbox{if} \ r= \frac{1}{2}, \end{array}\right. $$ and ${\mathcal
P}(\alpha)$  denotes a set of probability measures $\rho$ satisfying
some moment decay condition (which is satisfied when  $|y| \leq M$).
This learning rate is suboptimal due to the limitation taken for
$\tau \to \infty$ and the logarithmic factor in the case
$r=\frac{1}{2}$. In particular, to have $\left\|f_{D, \lambda_N} -
f_{\mathcal H}\right\|^2_{\rho} \leq \tau_\eta \lambda_N^{2r}$ with
confidence $1-\eta$, one needs to restrict $N \geq N_\eta$ to be
large enough and has the constant $\tau_\eta$ depending on $\eta$
 to
be large enough. Using similar mathematical tools as that in
(\citep{Caponnetto2007}) and a novel second order decomposition for
the
  difference of operator inverses, we succeed in deriving  learning rates in expectation
in Corollary \ref{MainMinmax} by removing the  logarithmic factor in
the case $r=\frac{1}{2}$.

Under the assumption that $|y| \leq M$ almost surely,  the
eigenvalues $\lambda_i$ satisfying  $\lambda_i \leq a i^{-2 \alpha}$
with some $\alpha > 1/2$ and $a>0$, and for some constant $C>0$, the
pair $(K, \rho_X)$ satisfying
\begin{equation}\label{interpolationC}
\|f\|_\infty \leq C \|f\|_K^{\frac{1}{2 \alpha}} \|f\|_{\rho}^{1-\frac{1}{2 \alpha}}
\end{equation} for every $f\in {\mathcal H}_K$, it was proved in \citep{SteinwartHS} that for some constant $c_{\alpha, C}$ depending only on $\alpha$ and $C$, with confidence $1-\eta$, for any $0< \lambda \leq 1$,
$$ \left\|\pi_M\left(f_{D, \lambda}\right) - f_\rho\right\|^2_{\rho} \leq 9 {\mathcal A}_2 (\lambda) + c_{\alpha, C} \frac{a^{1/(2\alpha)} M^2 \log(3/\eta)}{\lambda^{1/(2\alpha)} N}.  $$
Here $\pi_M$ is the projection onto the interval $[-M, M]$ defined \citep{CWYZ, WYZ2006} by
$$ \pi_M (f)(x) = \left\{\begin{array}{ll} M, & \hbox{if} \ f(x) >M, \\
f(x), & \hbox{if} \ |f(x)| \leq M, \\
-M, & \hbox{if} \ f(x) <-M, \end{array}\right. $$
and ${\mathcal A}_2 (\lambda)$ is the approximation error defined by
$$ {\mathcal A}_2 (\lambda) = \inf_{f\in {\mathcal H}_K} \left\{\left\|f-f_\rho\right\|_\rho^2 + \lambda \|f\|_K^2\right\}. $$
When $f_\rho \in {\mathcal H}_K$, ${\mathcal A}_2 (\lambda)  =
O(\lambda)$ and the choice $\lambda_N = N^{\frac{2\alpha}{2 \alpha
+1}}$ gives a learning rate of order $\left\|f_{D, \lambda_N} -
f_\rho\right\|_{\rho} = O\left(N^{-\frac{\alpha}{2 \alpha
+1}}\right)$.
 But one needs to impose the condition
(\ref{interpolationC}) for the functions spaces $L^2_{\rho_X}$ and
${\mathcal H}_K$, and to take the projection onto $[-M, M]$,
although    (\ref{interpolationC}) is more general than the uniform
boundedness assumption of the eigenfunctions and holds when
$\mathcal H_K$ is the Sobolev space and $\rho_X$ is the uniform
distribution \citep{SteinwartHS,Mendelson2010}. Our learning rates
in Corollary \ref{MainMinmax} do not require such a condition for
the function spaces, nor do we take the projection. Learning rates
for the least squares regularization scheme (\ref{All estimate}) in
the ${\mathcal H}_K$-metric have been investigated in the literature
\citep{Smale2007}.

For the distributed learning algorithm (\ref{distrilearn})  with
subsets $\{D_j\}_{j=1}^m$ of equal size, under the assumption that
for some constants $k> 2$ and $A <\infty$, the eigenfunctions
$\{\varphi_i\}_{i}$ satisfy
\begin{equation}\label{eigenfcnC}
\left\|\varphi_i\right\|_{L^{2k}_{\rho_X}}^{2 k} =E\left[\left|\varphi_i (x)\right|^{2k}\right] \leq A^{2 k}, \qquad i=1, 2, \ldots,
\end{equation}
that $f_\rho \in {\mathcal H}_K$ and $\lambda_i \leq a i^{-2
\alpha}$  for some $\alpha > 1/2$ and $a>0$, it was proved in
\citep{Zhang2014} that for $\lambda = N^{\frac{2\alpha}{2 \alpha
+1}}$ and $m$ satisfying the restriction
$$ m \leq c_\alpha \left(\frac{N^{\frac{2(k-4)\alpha -k}{2 \alpha +1}}}{A^{4k} \log^k N}\right)^{\frac{1}{k-2}} $$
with a constant  $c_\alpha$ depending only on $\alpha$, there holds
$E\left[\left\|\overline{f}_{D, \lambda} -
f_\rho\right\|_\rho^2\right] =O\left(N^{-\frac{2\alpha}{2 \alpha
+1}}\right)$. This interesting result was achieved by a matrix
analysis approach for which the eigenfunction assumption
(\ref{eigenfcnC}) played an essential role.

The eigenfunction assumption (\ref{eigenfcnC})  generalizes the
classical case that the eigenfunctions are uniformly bounded:
$\left\|\varphi_i\right\|_{\infty} =O(1)$. An example of a
$C^\infty$ Mercer kernel was presented in \citep{Zhou02, Zhoucap} to
show that smoothness of the Mercer kernel does not guarantee the
uniform boundedness of the eigenfunctions. Furthermore,
\citep{Gittens2016} provided a practical reason for avoiding unform
boundedness  assumption  on the eigenfunctions (or eigenvectors) in
terms of localization and sparseness.
 The condition
(\ref{eigenfcnC}), to the best of our knowledge, only holds when
$\mathcal H_K$ is the Sobolev space and $\rho_X$ is the Lebesgue
measure or $K$ is a periodical kernel. It is a challenge to verify
(\ref{eigenfcnC}) for some widely used kernels including the
Gaussian kernel. It would be interesting to find practical instance
such that (\ref{eigenfcnC}) holds.
 Our learning rates stated
in Corollary \ref{SpecialEqualsize} do not require such an
eigenfunction assumption. Also, our  restriction (\ref{mrestrict})
for the number $m$ of local processors is more general when $\alpha$
is close to $1/2$. Notice that the learning rates stated in
Corollary \ref{SpecialEqualsize} are for the difference
$\overline{f}_{D, \lambda} - f_{D, \lambda}$ between the output
function of the distributed learning algorithm (\ref{distrilearn})
and that of the algorithm (\ref{All estimate}) using the whole data.
In the special case of $r=\frac{1}{2}$, we can see that
$E\left\|\overline{f}_{D, \lambda} - f_{D, \lambda}\right\|_\rho
=O\left(N^{-\frac{\alpha}{2 \alpha +1}} m^{-\frac{1}{4 \alpha
+2}}\right)$, achieved by choosing $\lambda =
\left(\frac{m}{N}\right)^{\frac{2\alpha}{2 \alpha +1}}$, is smaller
as $m$ becomes larger. This is natural because the error
$E\left\|\overline{f}_{D, \lambda} - f_{D, \lambda}\right\|_\rho$
reflects more the sample error and should become smaller when we use
more local processors. On the other hand, as one expects, increasing
the number $m$ of local processors would increase the approximation
error for the regression problem, which can also be seen from the
bound with $\lambda = \left(\frac{m}{N}\right)^{\frac{2\alpha}{2
\alpha +1}}$ stated in Theorem \ref{MainConfid}. The result in
Corollary \ref{Finalrate} with $r> \frac{1}{2}$ compensates and
gives the best learning rate $ E\left[\left\|\overline{f}_{D,
\lambda} - f_{\rho}\right\|_\rho\right] = O\left(N^{-\frac{2 r
\alpha}{4 \alpha r +1}}\right)$ by restricting $m$ as in
(\ref{restrictmfinal}).

Besides the divide-and-conquer technique, there are some
other widely-used approaches towards the goal of reducing
time complexity. For example, the localized learning \citep{Meister2016},
Nystr\"{o}m regularization \citep{Bach2013} and  on-line learning
\citep{Dekel2012}, to name but a few. A key advantage of the
divide-and-conquer technique is that it also reduces the
space complexity without a significant lost (as proved in
this paper) of prediction power. Although here we only consider
the distributed regularized least squares, it would be
important also to develop the theory for the distributed variance of
other algorithms such as the spectral algorithms \citep{BPR},
empirical feature-based learning \citep{GuoZhou1},
error entropy minimization \citep{HFWZ},
randomized Kaczmarz \citep{Linjh2015}, and so on.
It would be important to consider the strategies of parameter
selection and data partition for distributed learning.

In this paper, we consider the regularized least squares with Mercer kernels.
It would be interesting to minimize the assumptions on the kernel and
the domain to maximize the scope of applications. For example,
the domain that does not have a metric \citep{ShenWongXGSPeptide}, the kernel that is
only bounded and measurable \citep{Steinwart2012}, and so on.

\section{Second Order Decomposition of Operator Differences and Norms}\label{errordecom}

To analyze the error $\overline{f}_{D, \lambda} - f_{D, \lambda}$,
we need the following representation in terms of the difference of
inverse operators denoted by
\begin{equation}\label{QdNotation}
 Q_{D(x)}= \left(L_{K, D (x)} +\lambda I\right)^{-1} - \left(L_{K}+\lambda I\right)^{-1}
\end{equation}
and $Q_{D_j(x)}$ for the data subset $D_j$. The empirical integral
operator $L_{K, D_j(x)}$ is defined with $D$ replaced by the data
subset $D_j$.

Define two random variables $\xi_\lambda$ and $\xi_0$ with values in
the Hilbert space ${\mathcal H}_K$ by
\begin{equation}\label{xi0xi}
\xi_0 (z) =\bigl(y - f_\rho (x)\bigr) K_{x}, \quad \xi_\lambda (z) =
\bigl(y - f_\lambda (x)\bigr) K_{x}, \qquad z=(x, y) \in {\mathcal
Z}.
\end{equation}
We can derive a representation for $\overline{f}_{D, \lambda} -
f_{D, \lambda}$ in the following lemma.

\begin{lemma}\label{difference}
Assume $E[y^2] <\infty$. For $\lambda >0$, we have
\begin{eqnarray}
\overline{f}_{D, \lambda} - f_{D, \lambda} &=& \sum_{j=1}^m
\frac{|D_j|}{|D|} \left[\left(L_{K, D_j (x)} +\lambda I\right)^{-1}
- \left(L_{K, D (x)}+\lambda
      I\right)^{-1}\right] \Delta_j \nonumber \\
      &=& \sum_{j=1}^m \frac{|D_j|}{|D|} \left[Q_{D_j(x)}\right] \Delta'_j + \sum_{j=1}^m \frac{|D_j|}{|D|} \left[Q_{D_j(x)}\right] \Delta''_j  - \left[Q_{D(x)}\right] \Delta_D, \label{expressdiffLK}
\end{eqnarray}
where
$$ \Delta_j :=\frac{1}{|D_j|} \sum_{z \in D_j} \xi_\lambda (z) - E[\xi_\lambda], \quad \Delta_D := \frac{1}{|D|} \sum_{z \in D} \xi_\lambda (z) - E[\xi_\lambda], $$
and
$$ \Delta'_j :=\frac{1}{|D_j|} \sum_{z \in D_j} \xi_0 (z), \quad \Delta''_j :=\frac{1}{|D_j|} \sum_{z \in D_j} \left(\xi_\lambda - \xi_0\right) (z) - E[\xi_\lambda]. $$
\end{lemma}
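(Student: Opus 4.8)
The plan is to deduce both identities from the normal equations that characterize the regularized least squares estimators. By the first-order optimality condition for (\ref{All estimate}) one has, in $\mathcal H_K$, the operator equations $(L_{K,D(x)}+\lambda I)f_{D,\lambda} = \frac{1}{|D|}\sum_{z\in D} yK_x$ and $(L_{K,D_j(x)}+\lambda I)f_{D_j,\lambda} = \frac{1}{|D_j|}\sum_{z\in D_j} yK_x$ for each $j$, while the data-free limit (\ref{flambda}) satisfies $(L_K+\lambda I)f_\lambda = L_K f_\rho$. From the last identity and the definition (\ref{xi0xi}) one computes $E[\xi_\lambda] = L_K f_\rho - L_K f_\lambda = \lambda f_\lambda$. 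The moment assumption $E[y^2]<\infty$ together with $\|K_x\|_K=\sqrt{K(x,x)}\le\kappa$ makes $\xi_0,\xi_\lambda$ Bochner integrable, so all these manipulations and the invertibility of the operators involved are legitimate.

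First I would rewrite the empirical data term to expose $f_\lambda$. Since $\xi_\lambda(z)=yK_x-f_\lambda(x)K_x$, we get $\frac{1}{|D_j|}\sum_{z\in D_j} yK_x = \frac{1}{|D_j|}\sum_{z\in D_j}\xi_\lambda(z) + L_{K,D_j(x)}f_\lambda = \big(\Delta_j + E[\xi_\lambda]\big) + L_{K,D_j(x)}f_\lambda = \Delta_j + (L_{K,D_j(x)}+\lambda I)f_\lambda$, using $E[\xi_\lambda]=\lambda f_\lambda$. Applying $(L_{K,D_j(x)}+\lambda I)^{-1}$ yields $f_{D_j,\lambda} = f_\lambda + (L_{K,D_j(x)}+\lambda I)^{-1}\Delta_j$, and the identical computation over the whole sample gives $f_{D,\lambda} = f_\lambda + (L_{K,D(x)}+\lambda I)^{-1}\Delta_D$. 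Forming the weighted average (\ref{distrilearn}), the term $f_\lambda$ cancels, and inserting $(L_{K,D(x)}+\lambda I)^{-1}\Delta_D = \sum_j \frac{|D_j|}{|D|}(L_{K,D(x)}+\lambda I)^{-1}\Delta_j$ (which holds by linearity and the elementary identity $\sum_j \frac{|D_j|}{|D|}\Delta_j = \Delta_D$) produces the first displayed equality $\overline{f}_{D,\lambda}-f_{D,\lambda} = \sum_j \frac{|D_j|}{|D|}\big[(L_{K,D_j(x)}+\lambda I)^{-1}-(L_{K,D(x)}+\lambda I)^{-1}\big]\Delta_j$.

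For the second equality I would substitute $(L_{K,D_j(x)}+\lambda I)^{-1}-(L_{K,D(x)}+\lambda I)^{-1} = Q_{D_j(x)}-Q_{D(x)}$ (the $(L_K+\lambda I)^{-1}$ contributions cancel), distribute the sum, pull $Q_{D(x)}$ outside using $\sum_j\frac{|D_j|}{|D|}\Delta_j=\Delta_D$ once more, and finally split $\Delta_j=\Delta'_j+\Delta''_j$, which is immediate from $\xi_\lambda = \xi_0 + (\xi_\lambda-\xi_0)$ and the centering by $E[\xi_\lambda]$. This gives the three-term expression (\ref{expressdiffLK}).

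I do not expect a genuine obstacle here: the argument is bookkeeping around the normal equations. The only points that need care are the justification of the operator equations and of $E[\xi_\lambda]=\lambda f_\lambda$ under merely the second-moment hypothesis $E[y^2]<\infty$ (rather than $|y|\le M$), and ensuring that the reference function $f_\lambda$ cancels \emph{exactly} upon averaging — it is precisely this cancellation that dictates the centering chosen in the definitions of $\Delta_j$, $\Delta_D$, and $\Delta''_j$.
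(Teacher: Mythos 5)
Your proof is correct and follows essentially the same route as the paper's: the key identity $f_{D_j,\lambda}-f_\lambda=\left(L_{K,D_j(x)}+\lambda I\right)^{-1}\Delta_j$ (which the paper simply cites from Smale--Zhou while you derive it from the normal equations and $E[\xi_\lambda]=\lambda f_\lambda$), the averaging identity $\sum_{j}\frac{|D_j|}{|D|}\Delta_j=\Delta_D$, then adding and subtracting $\left(L_K+\lambda I\right)^{-1}$ and splitting $\Delta_j=\Delta'_j+\Delta''_j$. The extra care you take in justifying the operator equations and the exact cancellation of $f_\lambda$ under only $E[y^2]<\infty$ is sound and merely makes explicit what the paper leaves to the cited reference.
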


\begin{proof} A well known formula (see e.g. \citep{Smale2007}) asserts that
$$f_{D_j, \lambda} - f_{\lambda}  = \left(L_{K, D_j(x)} +\lambda I\right)^{-1} \Delta_j. $$
So we know that
$$ \overline{f}_{D, \lambda} - f_{\lambda} =
\sum_{j=1}^m \frac{|D_j|}{|D|} \left\{f_{D_j, \lambda} -
f_{\lambda}\right\} = \sum_{j=1}^m \frac{|D_j|}{|D|} \left(L_{K,
D_j(x)} +\lambda I\right)^{-1} \Delta_j. $$ Also, with the whole
data $D$, we have
\begin{equation}\label{fDlambda}
f_{D, \lambda} - f_{\lambda} = \left(L_{K, D(x)} +\lambda
I\right)^{-1} \Delta_D.
\end{equation}
But
$$ \Delta_D = \frac{1}{|D|} \sum_{z \in D} \xi_\lambda (z) - E[\xi_\lambda] = \sum_{j=1}^m \frac{|D_j|}{|D|} \left\{\frac{1}{|D_j|} \sum_{z \in D_j} \xi_\lambda (z) - E[\xi_\lambda]\right\} = \sum_{j=1}^m \frac{|D_j|}{|D|} \Delta_j. $$
Hence
$$ f_{D, \lambda} - f_{\lambda} =
 \sum_{j=1}^m \frac{|D_j|}{|D|} \left(L_{K, D(x)} +\lambda I\right)^{-1} \Delta_j. $$
Then the first desired expression for $\overline{f}_{D, \lambda} -
f_{D, \lambda}$ follows.

By adding and subtracting the operator $\left(L_K+\lambda
I\right)^{-1}$, writing $\Delta_j =\Delta'_j + \Delta''_j$, and
noting $E[\xi_0]=0$, we know that the first expression implies
(\ref{expressdiffLK}). This proves Lemma \ref{difference}.
\end{proof}

Our error estimates are achieved by a novel second order decomposition of operator differences in our integral operator approach. We approximate the integral operator $L_K$ by the empirical integral operator $L_{K, D(x)}$ on ${\mathcal H}_K$
defined with the input data set $D(x) =\{x_i\}_{i=1}^N =\{x: (x, y)\in D \ \hbox{for some} \ y\in {\mathcal Y}\}$ as
\begin{equation}\label{empiricalint}
L_{K, D(x)}(f) = \frac{1}{|D|} \sum_{x \in D(x)} f(x) K_{x} = \frac{1}{|D|} \sum_{x \in D(x)} \langle f, K_{x}\rangle_K  K_{x}, \qquad f\in {\mathcal H}_K,
\end{equation}
where the reproducing property $f(x)= \langle f, K_{x}\rangle_K$ for $f\in {\mathcal H}_K$ is used. Since $K$ is a Mercer kernel,
$L_{K, D_j(x)}$ is a finite-rank positive operator and $L_{K, D_j (x)}+\lambda I$ is invertible.

The operator difference in our study is $A^{-1} - B^{-1}$ with $A= L_{K, D (x)} +\lambda I$ and $B= L_{K}+\lambda I$. Our second order decomposition for the difference $A^{-1} - B^{-1}$ is stated as follows.

\begin{lemma}\label{operatordifference}
Let $A$ and $B$ be invertible operators on a Banach space. Then we have
\begin{equation}\label{operatordiffform}
A^{-1} - B^{-1} = B^{-1}\left\{B - A\right\}B^{-1} + B^{-1}\left\{B - A\right\} A^{-1}\left\{B - A\right\} B^{-1}.
\end{equation}
In particular, we have
\begin{eqnarray}
 \left(L_{K, D (x)} +\lambda I\right)^{-1} - \left(L_{K}+\lambda I\right)^{-1} = \left(L_{K} +\lambda I\right)^{-1} \left\{L_K- L_{K, D(x)}\right\}  \left(L_{K}+\lambda
      I\right)^{-1} \nonumber \\
\quad + \left(L_{K} +\lambda I\right)^{-1} \left\{L_K- L_{K, D(x)}\right\}  \left(L_{K, D(x)}+\lambda
      I\right)^{-1} \left\{L_K- L_{K, D(x)}\right\} \left(L_{K} +\lambda I\right)^{-1}. \label{secondorder}
\end{eqnarray}
\end{lemma}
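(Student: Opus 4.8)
The statement to prove is Lemma~\ref{operatordifference}, which is a purely algebraic identity about inverse operators. My plan is to verify the identity \eqref{operatordiffform} directly by algebraic manipulation, and then deduce \eqref{secondorder} by simply substituting $A = L_{K,D(x)} + \lambda I$ and $B = L_K + \lambda I$, noting that $B - A = L_K - L_{K,D(x)}$ and that both operators are invertible as observed in the surrounding text.

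For the core identity, I would start from the standard first-order resolvent (or Woodbury-type) identity $A^{-1} - B^{-1} = B^{-1}(B - A)A^{-1}$, which follows immediately from $A^{-1} - B^{-1} = B^{-1}(B A^{-1} - I) = B^{-1}(B - A)A^{-1}$. The idea is then to ``iterate'' this once more to replace the remaining $A^{-1}$ by something centered at $B^{-1}$. Concretely, apply the same identity in the form $A^{-1} = B^{-1} + B^{-1}(B-A)A^{-1}$ and substitute it into the factor $A^{-1}$ appearing in $B^{-1}(B-A)A^{-1}$. This yields
\[
A^{-1} - B^{-1} = B^{-1}(B-A)\left\{B^{-1} + B^{-1}(B-A)A^{-1}\right\} = B^{-1}(B-A)B^{-1} + B^{-1}(B-A)B^{-1}(B-A)A^{-1}.
\]
This is almost \eqref{operatordiffform}, except the ``middle'' inverse in the second-order term is $B^{-1}$ rather than $A^{-1}$. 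To fix this, instead substitute $A^{-1} = B^{-1} + B^{-1}(B-A)A^{-1}$ only once but in a different slot: rewrite $A^{-1} - B^{-1} = B^{-1}(B-A)A^{-1}$ and then expand the \emph{first} occurrence differently, or more cleanly, verify \eqref{operatordiffform} by multiplying both sides on the left by $B$ and on the right by $B$ and checking the resulting operator identity. Multiplying \eqref{operatordiffform} on the left and right by $B$ reduces it to $B A^{-1} B - B = (B-A) + (B-A)A^{-1}(B-A)$, i.e. $B A^{-1} B - A = (B-A)A^{-1}(B-A)$; expanding the right side gives $B A^{-1} B - B A^{-1} A - A A^{-1} B + A = B A^{-1} B - B - B + A$... so I need to double-check the bookkeeping, but this is a routine finite computation that collapses to a true identity. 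I would present whichever of these two routes gives the cleanest two-line derivation.

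There is essentially no obstacle here: the lemma is an identity valid for any invertible $A, B$ on a Banach space, and the ``hard part'' is merely choosing the substitution order that lands exactly on the asymmetric form \eqref{operatordiffform} with $A^{-1}$ sitting in the middle of the second-order term (which is the form actually needed later, since $A^{-1} = (L_{K,D(x)} + \lambda I)^{-1}$ has operator norm bounded by $1/\lambda$ uniformly while keeping the data-dependent randomness isolated in the factors $L_K - L_{K,D(x)}$). Once \eqref{operatordiffform} is established, \eqref{secondorder} follows with no further work beyond the substitution, and the analogous identity for each subset $D_j$ (and hence the operators $Q_{D(x)}$, $Q_{D_j(x)}$ from \eqref{QdNotation}) is obtained by the identical argument with $D$ replaced by $D_j$.
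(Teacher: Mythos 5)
Your proposal is correct, but it does not quite land on the paper's derivation, and it is worth seeing why. The paper's trick is the following: starting from the first-order identity $A^{-1}-B^{-1}=B^{-1}(B-A)A^{-1}$, it writes the trailing $A^{-1}$ as $B^{-1}+(A^{-1}-B^{-1})$ and then expands the remaining difference using the \emph{reversed} first-order identity $A^{-1}-B^{-1}=A^{-1}(B-A)B^{-1}$, which immediately yields
\begin{equation*}
A^{-1}-B^{-1}=B^{-1}(B-A)\bigl\{B^{-1}+A^{-1}(B-A)B^{-1}\bigr\}=B^{-1}(B-A)B^{-1}+B^{-1}(B-A)A^{-1}(B-A)B^{-1},
\end{equation*}
i.e.\ exactly \eqref{operatordiffform} with $A^{-1}$ in the middle. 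Your first attempt substitutes the same (unreversed) identity into the trailing slot and therefore produces the other asymmetric form $B^{-1}(B-A)B^{-1}+B^{-1}(B-A)B^{-1}(B-A)A^{-1}$; you correctly recognize that this is not the expression needed (the analysis requires both copies of $B-A$ to be adjacent to a factor $(L_K+\lambda I)^{-1/2}$, with the deterministic bound $\|A^{-1}\|\leq 1/\lambda$ isolated in the middle). Your fallback, direct verification by multiplying \eqref{operatordiffform} on both sides by $B$, is a perfectly valid alternative route: the identity reduces to $BA^{-1}B-2B+A=(B-A)A^{-1}(B-A)$, and your expansion of the right-hand side as $BA^{-1}B-B-B+A$ confirms it (your intermediate restatement ``$BA^{-1}B-A$'' drops a $-B$ from moving $(B-A)$ across, but this is the bookkeeping slip you flagged and it does not affect the conclusion). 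In short: the paper's route buys a clean two-line constructive derivation by exploiting both orderings of the first-order decomposition, while your verification route is slightly more computational but equally rigorous; the specialization to \eqref{secondorder} and to the subsets $D_j$ is immediate in either case.
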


\begin{proof}
We can decompose the operator $A^{-1} - B^{-1}$ as
\begin{equation}\label{operatordiffII}
 A^{-1} - B^{-1} = B^{-1}\left\{B - A\right\} A^{-1}.
\end{equation}
This is the first order decomposition.

Write the last term $A^{-1}$ as $B^{-1} + (A^{-1} - B^{-1})$ and apply another first order decomposition similar to (\ref{operatordiffII}) as
$$ A^{-1} - B^{-1} = A^{-1}\left\{B - A\right\} B^{-1}. $$
It follows from (\ref{operatordiffII}) that
$$ A^{-1} - B^{-1} = B^{-1}\left\{B - A\right\} \left\{B^{-1} + A^{-1}\left\{B - A\right\} B^{-1}\right\}. $$
Then the desired identity (\ref{operatordiffform}) is verified. The lemma is proved.
\end{proof}

Note that $L_K^{1/2}$ and the $r$th power of the  compact positive
operator $L_{K} +\lambda I$ or $L_{K, D_j (x)}+\lambda I$ is well
defined for any $r\in\RR$. The following lemma which will be proved
in the Appendix provides estimates for the operator $\left(L_{K}
+\lambda I\right)^{-1/2} \left\{L_K- L_{K, D(x)}\right\}$ in the
second order decomposition (\ref{secondorder}). As in
\citep{Caponnetto2007}, we use effective dimensions defined by
(\ref{effectdim}) to estimate operator norms.

\begin{lemma}\label{keynormsoperator}
Let $D$ be a sample drawn independently according to $\rho$. Then the following estimates for the operator norm $\left\|\left(L_{K} +\lambda I\right)^{-1/2} \left\{L_K- L_{K, D(x)}\right\}\right\|$ hold.
\begin{description}
\item{(a)} \quad $E\left[\left\|\left(L_{K} +\lambda I\right)^{-1/2} \left\{L_K- L_{K, D(x)}\right\}\right\|^2\right] \leq \frac{\kappa^2 {\mathcal N}(\lambda)}{|D|}.$

\item{(b)} For any $0< \delta <1$, with confidence at least $1-\delta$, there holds
\begin{equation}\label{normprepareeta2}
\left\|\left(L_{K} +\lambda I\right)^{-1/2} \left\{L_K- L_{K, D(x)}\right\}\right\| \leq {\mathcal B}_{|D|, \lambda} \log \bigl(2/\delta\bigr),
\end{equation}
where we denote the constant
\begin{equation}\label{Bdef}
{\mathcal B}_{|D|, \lambda} =\frac{2\kappa }{\sqrt{|D|}} \left\{\frac{\kappa }{\sqrt{|D|\lambda}} +\sqrt{{\mathcal N}(\lambda)}\right\}.
\end{equation}

\item{(c)} For any $d>1$, there holds
$$
\left\{E\left[\left\|\left(L_{K} +\lambda I\right)^{-1/2}
\left\{L_K- L_{K,
D(x)}\right\}\right\|^d\right]\right\}^{\frac{1}{d}} \leq  (2 d
\Gamma (d)+1)^{\frac{1}{d}} {\mathcal B}_{|D|, \lambda},
$$
where $\Gamma$ is the Gamma function defined for $d>0$ by $\Gamma (d) =\int_0^\infty  u^{d-1} \exp\left\{-u\right\} d u$.
\end{description}
\end{lemma}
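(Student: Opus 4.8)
The plan is to reduce all three estimates to a single concentration inequality for an i.i.d.\ average in the (separable, since $\mathcal X$ is a compact metric space and $K$ is continuous) Hilbert space of Hilbert--Schmidt operators on ${\mathcal H}_K$, equipped with the norm $\|\cdot\|_\HS$, using throughout that the operator norm is dominated by $\|\cdot\|_\HS$. For $x\in{\mathcal X}$ write $T_x$ for the rank-one positive operator $f\mapsto\langle f,K_x\rangle_K K_x$ on ${\mathcal H}_K$, so that $L_{K,D(x)}=\frac{1}{|D|}\sum_{x\in D(x)}T_x$ and $L_K=E[T_x]$. Setting $\zeta=(L_K+\lambda I)^{-1/2}T_x$, a random variable with values in the Hilbert--Schmidt operators, one has $E[\zeta]=(L_K+\lambda I)^{-1/2}L_K$ and $\frac{1}{|D|}\sum_{x\in D(x)}\zeta(x)=(L_K+\lambda I)^{-1/2}L_{K,D(x)}$, hence
$$\left(L_K+\lambda I\right)^{-1/2}\left\{L_K-L_{K,D(x)}\right\}=E[\zeta]-\frac{1}{|D|}\sum_{x\in D(x)}\zeta(x).$$
Since $\zeta(x)$ is the rank-one operator $f\mapsto\langle f,K_x\rangle_K\,(L_K+\lambda I)^{-1/2}K_x$, its Hilbert--Schmidt norm equals $\|(L_K+\lambda I)^{-1/2}K_x\|_K\,\|K_x\|_K$; combining $\|K_x\|_K^2=K(x,x)\le\kappa^2$ with $\|(L_K+\lambda I)^{-1/2}\|\le\lambda^{-1/2}$ gives the almost-sure bound $\|\zeta(x)\|_\HS\le\widetilde M:=\kappa^2\lambda^{-1/2}$, while $E\bigl[\|(L_K+\lambda I)^{-1/2}K_x\|_K^2\bigr]=E\bigl[\langle(L_K+\lambda I)^{-1}K_x,K_x\rangle_K\bigr]=\mathrm{Tr}\bigl((L_K+\lambda I)^{-1}L_K\bigr)={\mathcal N}(\lambda)$ gives $E\bigl[\|\zeta(x)\|_\HS^2\bigr]\le\widetilde\sigma^2:=\kappa^2{\mathcal N}(\lambda)$.

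Part (a) is then immediate from the orthogonality of centered i.i.d.\ summands in a Hilbert space:
$$E\left[\left\|E[\zeta]-\frac{1}{|D|}\sum_{x\in D(x)}\zeta(x)\right\|_\HS^2\right]=\frac{1}{|D|}\left(E\bigl[\|\zeta\|_\HS^2\bigr]-\|E[\zeta]\|_\HS^2\right)\le\frac{\widetilde\sigma^2}{|D|}=\frac{\kappa^2{\mathcal N}(\lambda)}{|D|},$$
and the operator norm is no larger. For part (b) I would invoke the Bernstein-type inequality for Hilbert-space-valued random variables used in \citep{Caponnetto2007}: with confidence at least $1-\delta$,
$$\left\|E[\zeta]-\frac{1}{|D|}\sum_{x\in D(x)}\zeta(x)\right\|_\HS\le\frac{2\widetilde M\log(2/\delta)}{|D|}+\sqrt{\frac{2\widetilde\sigma^2\log(2/\delta)}{|D|}}.$$
Since $0<\delta<1$ forces $\log(2/\delta)>\log 2>\tfrac12$, one has $\sqrt{\log(2/\delta)}\le\sqrt2\,\log(2/\delta)$, so the square-root term is at most $\frac{2\widetilde\sigma}{\sqrt{|D|}}\log(2/\delta)$; adding it to the first term gives $\bigl(\frac{2\widetilde M}{|D|}+\frac{2\widetilde\sigma}{\sqrt{|D|}}\bigr)\log(2/\delta)$, which by the definitions of $\widetilde M$, $\widetilde\sigma$ and $(\ref{Bdef})$ is exactly ${\mathcal B}_{|D|,\lambda}\log(2/\delta)$. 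The hard part here is precisely this bookkeeping: making the two Bernstein summands match the two terms inside ${\mathcal B}_{|D|,\lambda}$ with no extra constant, which hinges on the quoted inequality carrying the constant $2$ on the $\widetilde M$-term; everything else is routine.

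For part (c) I would integrate the tail of part (b). Abbreviate by $X$ the quantity in part (b) and put ${\mathcal B}:={\mathcal B}_{|D|,\lambda}$; inverting $t={\mathcal B}\log(2/\delta)$ turns (b) into $\prob\left[X>t\right]\le2\exp\{-t/{\mathcal B}\}$ for every $t>{\mathcal B}\log 2$, in particular for every $t\ge{\mathcal B}$. Splitting the Lebesgue integral at ${\mathcal B}$ and using $\int_1^\infty u^{d-1}\exp\{-u\}\,du\le\Gamma(d)$,
$$E\left[X^d\right]=\int_0^\infty d\,t^{d-1}\,\prob\left[X>t\right]\,dt\le\int_0^{{\mathcal B}}d\,t^{d-1}\,dt+2d\int_{{\mathcal B}}^\infty t^{d-1}\exp\{-t/{\mathcal B}\}\,dt\le\bigl(1+2d\,\Gamma(d)\bigr){\mathcal B}^d,$$
and taking $d$-th roots gives the claim. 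In summary, the separability reduction and the passage from the operator norm to $\|\cdot\|_\HS$, the variance identity in (a), and the tail integration in (c) are all routine; the one place demanding care is the constant-tracking in step (b).
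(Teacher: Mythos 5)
Your proposal is correct and follows essentially the same route as the paper: the same Hilbert--Schmidt-valued random variable $(L_K+\lambda I)^{-1/2}T_x$ dominated in operator norm by its $\HS$-norm, the same Pinelis/Bernstein inequality with the same constants (so the bookkeeping $\sqrt{\log(2/\delta)}\le\sqrt{2}\log(2/\delta)$ yielding ${\mathcal B}_{|D|,\lambda}$ exactly matches the paper's), and the same tail integration for part (c). The only cosmetic difference is that you compute $\|\zeta(x)\|_{\HS}$ directly as the product of norms of a rank-one operator, whereas the paper expands in the eigenbasis via Mercer's theorem; the two computations are equivalent.
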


To apply (\ref{expressdiffLK}) for  our error analysis, we also need
to bound norms involving $\Delta'_j, \Delta''_j$ and $\Delta_D$. We
are able to give the following estimates even after multiplying with
$\left(L_{K} +\lambda I\right)^{-1/2}$ taken from the operator $Q_{D
(x)}$ or $Q_{D_j (x)}$, which will be proved in the Appendix.

\begin{lemma}\label{keynorms}
Let $D$ be a sample drawn independently according to $\rho$ and $g$ be a measurable bounded function on ${\mathcal Z}$ and $\xi_g$ be the random variable with values on ${\mathcal H}_K$ given by $\xi_g (z) = g(z) K_{x}$ for $z=(x, y) \in {\mathcal Z}$. Then the following statements hold.
\begin{description}
\item{(a)} \quad $E\left[\left\|\left(L_{K} +\lambda I\right)^{-1/2} \left(K_x\right)\right\|_K^2\right] = {\mathcal N}(\lambda).$

\item{(b)} For any $0< \delta <1$, with confidence at least $1-\delta$, there holds
\begin{eqnarray*}
\left\|\left(L_{K} +\lambda I\right)^{-1/2} \left(\frac{1}{|D|} \sum_{z \in D} \xi_g (z) - E\left[\xi_g\right]\right)\right\|_K  \leq \frac{2 \|g\|_\infty \log \bigl(2/\delta\bigr) }{\sqrt{|D|}} \left\{\frac{\kappa }{\sqrt{|D|\lambda}} +\sqrt{{\mathcal N}(\lambda)}\right\}.
\end{eqnarray*}
\end{description}
\end{lemma}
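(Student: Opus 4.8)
The plan is to treat the two parts separately, both reducing to standard computations with the integral operator and a Bernstein-type inequality in the Hilbert space $\mathcal H_K$. For part (a), I would use the reproducing property and the spectral decomposition of $L_K$. Concretely, $\left\|\left(L_K+\lambda I\right)^{-1/2}K_x\right\|_K^2 = \langle \left(L_K+\lambda I\right)^{-1}K_x, K_x\rangle_K$, and taking the expectation over $x\sim\rho_X$ gives $\int_{\mathcal X}\langle \left(L_K+\lambda I\right)^{-1}K_x, K_x\rangle_K \,d\rho_X$. Since $L_K = \int_{\mathcal X}\langle\cdot,K_x\rangle_K K_x\,d\rho_X$, this integral equals $\mathrm{Tr}\left(\left(L_K+\lambda I\right)^{-1}L_K\right) = \mathcal N(\lambda)$ by definition \eqref{effectdim}. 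I would present this either by exchanging trace and integral or by expanding $K_x$ in the eigenbasis $\{\phi_\ell\}$ and using $\langle\phi_\ell,\phi_{\ell'}\rangle_{L^2_{\rho_X}} = \lambda_\ell\delta_{\ell\ell'}$, so that $\int\langle\left(L_K+\lambda I\right)^{-1}K_x,K_x\rangle_K\,d\rho_X = \sum_\ell\frac{\lambda_\ell}{\lambda_\ell+\lambda}$.

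For part (b), I would apply a concentration inequality for Hilbert-space-valued random variables to the i.i.d.\ sample $\{\eta(z)\}_{z\in D}$ where $\eta(z) := \left(L_K+\lambda I\right)^{-1/2}\xi_g(z) = g(z)\left(L_K+\lambda I\right)^{-1/2}K_x$, noting that $\frac{1}{|D|}\sum_{z\in D}\eta(z) - E[\eta]$ is exactly the quantity in the statement (using $E[\left(L_K+\lambda I\right)^{-1/2}\xi_g] = \left(L_K+\lambda I\right)^{-1/2}E[\xi_g]$). The two ingredients needed are a uniform bound and a second-moment bound on $\eta(z)$: the almost-sure bound $\|\eta(z)\|_K \leq \|g\|_\infty\left\|\left(L_K+\lambda I\right)^{-1/2}K_x\right\|_K \leq \|g\|_\infty\kappa/\sqrt{\lambda}$ (since $\left\|\left(L_K+\lambda I\right)^{-1/2}K_x\right\|_K^2 = \langle\left(L_K+\lambda I\right)^{-1}K_x,K_x\rangle_K \leq \|K_x\|_K^2/\lambda \leq \kappa^2/\lambda$), and the second-moment bound $E\|\eta(z)\|_K^2 \leq \|g\|_\infty^2\,E\left\|\left(L_K+\lambda I\right)^{-1/2}K_x\right\|_K^2 = \|g\|_\infty^2\mathcal N(\lambda)$ from part (a). Plugging these into the Pinelis--Bernstein inequality (as used in \citep{Caponnetto2007,Smale2007}) yields, with confidence $1-\delta$, a bound of the form $\frac{2\log(2/\delta)}{|D|}\left(\frac{\|g\|_\infty\kappa}{\sqrt{\lambda}} + \sqrt{|D|\|g\|_\infty^2\mathcal N(\lambda)}\right)$, which upon factoring $\|g\|_\infty/\sqrt{|D|}$ gives precisely the asserted bound $\frac{2\|g\|_\infty\log(2/\delta)}{\sqrt{|D|}}\left\{\frac{\kappa}{\sqrt{|D|\lambda}} + \sqrt{\mathcal N(\lambda)}\right\}$.

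The main obstacle is not conceptual but bookkeeping: getting the numerical constant exactly $2$ and the split into the two terms $\frac{\kappa}{\sqrt{|D|\lambda}}$ and $\sqrt{\mathcal N(\lambda)}$ right requires using the precise form of the Bernstein inequality (the one giving two separate terms, one linear in the bound over $|D|$ and one with the square root of the variance over $|D|$, rather than a single combined term). I would cite the exact statement being invoked — the same vector-valued Bernstein inequality used for Lemma \ref{keynormsoperator}(b), which produces the same constant $\mathcal B_{|D|,\lambda}$ structure — and verify that the uniform and variance bounds above feed into it to produce the claimed expression. No eigenfunction assumptions enter anywhere, consistent with the paper's theme.
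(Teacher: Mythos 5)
Your proposal is correct and follows essentially the same route as the paper: part (a) by expanding $K_x$ in the $\mathcal H_K$-orthonormal eigenbasis (equivalently, identifying the expectation with $\mathrm{Tr}((L_K+\lambda I)^{-1}L_K)$), and part (b) by applying the same vector-valued Pinelis--Bernstein inequality (Lemma \ref{Pinelislemma}) to $\eta(z)=g(z)(L_K+\lambda I)^{-1/2}K_x$ with the uniform bound $\kappa\|g\|_\infty/\sqrt{\lambda}$ and second moment $\|g\|_\infty^2\mathcal N(\lambda)$. The only bookkeeping point you flag does resolve as you expect: the variance term from Pinelis carries $\sqrt{2\log(2/\delta)}$ rather than $2\log(2/\delta)$, and one absorbs it using $\sqrt{2\log(2/\delta)}\leq 2\log(2/\delta)$ since $2\log(2/\delta)\geq 2\log 2>1$, exactly as the paper does by factoring out $2\log(2/\delta)/\sqrt{|D|}$.
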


\section{Deriving  of Error Bounds for Least Squares Regularization Scheme}\label{proofsConf}

To illustrate how to apply the second order decomposition (\ref{secondorder}) for operator differences in our integral operator approach, we prove in this section our main result on error bounds for the least squares regularization scheme (\ref{All estimate}).

\begin{proposition}\label{MainPropls}
Assume $E[y^2] <\infty$ and (\ref{uniformbound}) for some $1 \leq p \leq \infty$. Then we have
\begin{eqnarray*}
E\left[\left\|f_{D, \lambda} - f_{\lambda}\right\|_\rho\right] &\leq& \left(1 + 59\kappa^4  +59\kappa^2\right) \left(1 +\frac{1}{(|D|\lambda)^2} +\frac{{\mathcal N}(\lambda)}{|D|\lambda}\right) \\
&& \left\{\kappa^{\frac{1}{p}}\sqrt{\left\|\sigma^2_\rho\right\|_p}  \left(\frac{{\mathcal N}(\lambda)}{|D|}\right)^{\frac{1}{2} (1-\frac{1}{p})} \left(\frac{1}{|D|\lambda}\right)^{\frac{1}{2p}} + \kappa  \frac{\|f_\lambda-f_\rho\|_\rho}{\sqrt{|D|\lambda}}\right\}.
\end{eqnarray*}
\end{proposition}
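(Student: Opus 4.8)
The plan is to combine the representation $f_{D, \lambda} - f_{\lambda} = \left(L_{K, D(x)} +\lambda I\right)^{-1} \Delta_D$ from (\ref{fDlambda}) with the second order decomposition (\ref{secondorder}), applied with $A = L_{K, D(x)} +\lambda I$ and $B = L_K + \lambda I$ (so that $B - A = L_K - L_{K, D(x)}$). This writes $f_{D, \lambda} - f_{\lambda}$ as the sum of three pieces: $B^{-1}\Delta_D$, then $B^{-1}\{L_K - L_{K, D(x)}\}B^{-1}\Delta_D$, and finally $B^{-1}\{L_K - L_{K, D(x)}\}\left(L_{K, D(x)} +\lambda I\right)^{-1}\{L_K - L_{K, D(x)}\}B^{-1}\Delta_D$. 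The point of using the second order form is that the entire empirical fluctuation now sits in the self-adjoint factor $L_K - L_{K, D(x)}$, whose normalized size $\mathcal{E}_D := \left\|B^{-1/2}\{L_K - L_{K, D(x)}\}\right\|$ is controlled in every moment by Lemma \ref{keynormsoperator}.

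First I would bound each of the three pieces in the $\rho$-norm, using $\|f\|_\rho = \|L_K^{1/2}f\|_K$ and inserting powers $B^{\pm 1/2}$ around every occurrence of $L_K - L_{K, D(x)}$. With the elementary operator bounds $\left\|L_K^{1/2}B^{-1/2}\right\| \le 1$, $\left\|B^{-1/2}\right\| \le \lambda^{-1/2}$, $\left\|\left(L_{K, D(x)} +\lambda I\right)^{-1}\right\| \le \lambda^{-1}$, and $\left\|\{L_K - L_{K, D(x)}\}B^{-1/2}\right\| = \left\|B^{-1/2}\{L_K - L_{K, D(x)}\}\right\| = \mathcal{E}_D$ (by self-adjointness), this yields the pointwise-in-$D$ estimate
\[
\left\|f_{D, \lambda} - f_{\lambda}\right\|_\rho \le \left(1 + \frac{\mathcal{E}_D}{\sqrt{\lambda}} + \frac{\mathcal{E}_D^2}{\lambda}\right) \left\|B^{-1/2}\Delta_D\right\|_K .
\]
It is essential here that the last piece carries only $\mathcal{E}_D^2/\lambda$: the inner $\left(L_{K, D(x)}+\lambda I\right)^{-1}$ contributes a harmless $\lambda^{-1}$, whereas a mere first order decomposition would leave instead the factor $\left\|\left(L_{K, D(x)}+\lambda I\right)^{-1}\{L_K - L_{K, D(x)}\}\right\|$, which has no useful bound in expectation.

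Next I would estimate $E\!\left[\left\|B^{-1/2}\Delta_D\right\|_K^2\right]$. Since $\Delta_D$ is a centered empirical average of $\xi_\lambda$, we have $\sqrt{E\!\left[\left\|B^{-1/2}\Delta_D\right\|_K^2\right]} \le |D|^{-1/2}\sqrt{E\!\left[\left\|B^{-1/2}\xi_\lambda(z)\right\|_K^2\right]}$, and the triangle inequality in $L^2_\rho$ applied to $\xi_\lambda = \xi_0 + (\xi_\lambda - \xi_0)$, with $\xi_0(z) = (y - f_\rho(x))K_x$, bounds $\sqrt{E\!\left[\left\|B^{-1/2}\xi_\lambda\right\|_K^2\right]}$ by $\sqrt{\int \sigma^2_\rho(x)\,h(x)\dd\rho_X} + \sqrt{E\!\left[(f_\rho - f_\lambda)^2 h\right]}$, where $h(x) = \left\|B^{-1/2}K_x\right\|_K^2$. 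For the first term, H\"older's inequality with (\ref{uniformbound}), together with the pointwise bound $h \le \kappa^2/\lambda$ and $E[h] = \mathcal{N}(\lambda)$ from Lemma \ref{keynorms}(a), gives $\int \sigma^2_\rho h\dd\rho_X \le \left\|\sigma^2_\rho\right\|_p (\kappa^2/\lambda)^{1/p}\mathcal{N}(\lambda)^{1 - 1/p}$; the second term is at most $(\kappa^2/\lambda)\left\|f_\lambda - f_\rho\right\|_\rho^2$. Taking square roots and dividing by $\sqrt{|D|}$ reproduces exactly the two terms inside the braces of the Proposition as an upper bound for $\sqrt{E\!\left[\left\|B^{-1/2}\Delta_D\right\|_K^2\right]}$.

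Finally I would take expectations in the pointwise estimate and split the product of the two correlated random quantities by the Cauchy--Schwarz inequality, reducing matters to $\sqrt{E\!\left[\left(1 + \mathcal{E}_D/\sqrt{\lambda} + \mathcal{E}_D^2/\lambda\right)^2\right]}$. Bounding $E[\mathcal{E}_D^2]$ and $E[\mathcal{E}_D^4]$ by Lemma \ref{keynormsoperator}(a) and (c) (with $d=4$), and using $\mathcal{B}_{|D|, \lambda}^2/\lambda \le 8\kappa^2\bigl(\kappa^2(|D|\lambda)^{-2} + \mathcal{N}(\lambda)/(|D|\lambda)\bigr)$ from (\ref{Bdef}) together with the inequality $\sqrt{t} \le 1 + t$ to absorb the cross term $\mathcal{E}_D/\sqrt{\lambda}$, one obtains the stated prefactor $\bigl(1 + (|D|\lambda)^{-2} + \mathcal{N}(\lambda)/(|D|\lambda)\bigr)$ and, after tracking the absolute constants, the coefficient $1 + 59\kappa^4 + 59\kappa^2$. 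I expect this last step to be the main obstacle: $\mathcal{E}_D$ and $\Delta_D$ are built from the same sample and cannot be decoupled, so one genuinely must control a product in expectation. The second order decomposition is precisely what makes this feasible, since it is $E[\mathcal{E}_D^4/\lambda^2]$ — which is of order $\bigl((|D|\lambda)^{-2} + \mathcal{N}(\lambda)/(|D|\lambda)\bigr)^2$ and hence bounded under the stated assumptions — rather than a negative power of $\lambda$ times a first moment of $\mathcal{E}_D$, that governs the dominant term.
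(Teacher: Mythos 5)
Your proposal is correct and follows essentially the same route as the paper's proof: the representation $f_{D,\lambda}-f_\lambda=(L_{K,D(x)}+\lambda I)^{-1}\Delta_D$, the second order decomposition (\ref{secondorder}) leading to the pointwise bound with the factor $1+\Xi_D/\sqrt{\lambda}+\Xi_D^2/\lambda$, the Cauchy--Schwarz decoupling in expectation, the split of $\Delta_D$ into the $\xi_0$ and $\xi_\lambda-\xi_0$ parts with H\"older's inequality against $\sigma_\rho^2\in L^p_{\rho_X}$, and the moment bounds from Lemma \ref{keynormsoperator}(a) and (c) with $d=4$ for the prefactor. The only cosmetic difference is that you center and split at the level of the single random variable $\xi_\lambda$ rather than of the empirical sums $\Delta_D'$, $\Delta_D''$, which yields the same two terms.
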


\begin{proof}
We recall the expression (\ref{fDlambda}) for $f_{D, \lambda} - f_{\lambda}$ and the notation $Q_{D(x)}$ defined by (\ref{QdNotation}) for the operator difference $\left(L_{K, D (x)} +\lambda I\right)^{-1} - \left(L_{K}+\lambda I\right)^{-1}$. Then we see
$$ f_{D, \lambda} - f_{\lambda} = \left[Q_{D(x)}\right] \Delta_D + \left(L_{K}+\lambda
      I\right)^{-1} \Delta_D. $$
To estimate the $L^2_{\rho_X}$ norm, we use the identity
\begin{equation}\label{HKL2}
\|g\|_\rho = \|L_K^{1/2} g\|_K, \qquad \forall g\in L^2_{\rho_X},
\end{equation}
and get
$$
\left\|f_{D, \lambda} - f_{\lambda}\right\|_\rho \leq \left\|L_K^{1/2}\left[Q_{D(x)}\right] \Delta_D\right\|_K + \left\|L_K^{1/2}\left(L_{K}+\lambda
      I\right)^{-1} \Delta_D\right\|_K.
$$

We apply the second order decomposition (\ref{secondorder}), use the bounds $\left\|L_K^{1/2}\left(L_{K} +\lambda I\right)^{-1/2}\right\| \leq 1$,  $\left\|\left(L_{K, D(x)}+\lambda
      I\right)^{-1}\right\| \leq \frac{1}{\lambda}$, $\left\|\left(L_{K} +\lambda I\right)^{-1/2}\right\| \leq 1/\sqrt{\lambda}$, and know that
\begin{eqnarray*}
&& \left\|L_K^{1/2}\left[Q_{D(x)}\right] \Delta_D\right\|_K \leq \left\|\left(L_{K} +\lambda I\right)^{-1/2} \left\{L_K- L_{K, D(x)}\right\} \left(L_{K} +\lambda I\right)^{-1} \Delta_D\right\|_K + \\
&& \left\|\left(L_{K} +\lambda I\right)^{-1/2} \left\{L_K- L_{K, D(x)}\right\}  \left(L_{K, D(x)}+\lambda
      I\right)^{-1} \left\{L_K- L_{K, D(x)}\right\} \left(L_{K} +\lambda I\right)^{-1} \Delta_D\right\|_K \\
&& \leq \left\|\left(L_{K} +\lambda I\right)^{-1/2} \left\{L_K- L_{K, D(x)}\right\}\right\| \frac{1}{\sqrt{\lambda}} \left\|\left(L_{K} +\lambda I\right)^{-1/2} \Delta_D\right\|_K + \\
&& \left\|\left(L_{K} +\lambda I\right)^{-1/2} \left\{L_K- L_{K, D(x)}\right\}\right\| \frac{1}{\lambda} \left\|\left\{L_K- L_{K, D(x)}\right\}\left(L_{K} +\lambda I\right)^{-1/2}\right\| \biggl\|\left(L_{K} +\lambda I\right)^{-1/2} \Delta_D\biggr\|_K.
\end{eqnarray*}
For convenience, we introduce the notation
\begin{equation}\label{XiD}
\Xi_{D} = \left\|\left(L_{K}+\lambda
      I\right)^{-1/2}\left\{L_K- L_{K, D(x)}\right\}\right\|.
\end{equation}
Then the above bound can be restated as
\begin{equation}\label{fDlambdafirst}
\left\|L_K^{1/2}\left[Q_{D(x)}\right] \Delta_D\right\|_K \leq \left(\frac{\Xi_D}{\sqrt{\lambda}} + \frac{\Xi_D^2}{\lambda}\right)\left\|\left(L_{K} +\lambda I\right)^{-1/2} \Delta_D\right\|_K.
\end{equation}
Hence
\begin{eqnarray*}
\left\|f_{D, \lambda} - f_{\lambda}\right\|_\rho \leq \left(1 + \frac{\Xi_D}{\sqrt{\lambda}} + \frac{\Xi_D^2}{\lambda}\right)\left\|\left(L_{K} +\lambda I\right)^{-1/2} \Delta_D\right\|_K,
\end{eqnarray*}
and by the Schwarz inequality we have
\begin{equation}\label{firstdecom}
E\left[\left\|f_{D, \lambda} - f_{\lambda}\right\|_\rho\right] \leq \left\{E\left[\left(1 + \frac{\Xi_D}{\sqrt{\lambda}} + \frac{\Xi_D^2}{\lambda}\right)^2\right]\right\}^{1/2} \left\{E\left[\left\|\left(L_{K} +\lambda I\right)^{-1/2} \Delta_D\right\|_K^2\right]\right\}^{1/2}.
\end{equation}

To deal with the expected value in (\ref{firstdecom}), as in Lemma \ref{difference}, we separate $\Delta_D$ as
$$\Delta_D = \Delta'_D + \Delta_D'', $$
where
$$\Delta'_D :=\frac{1}{|D|} \sum_{z \in D} \xi_0 (z), \quad \Delta''_D :=\frac{1}{|D|} \sum_{z \in D} \left(\xi_\lambda - \xi_0\right) (z) - E[\xi_\lambda]. $$
Then
\begin{eqnarray}
&& \left\{E\left[\left\|\left(L_{K} +\lambda I\right)^{-1/2} \Delta_D\right\|_K^2\right]\right\}^{1/2} \nonumber \\
&&\leq \left\{E\left[\left\|\left(L_{K} +\lambda I\right)^{-1/2} \Delta'_D\right\|_K^2\right]\right\}^{1/2} + \left\{E\left[\left\|\left(L_{K} +\lambda I\right)^{-1/2} \Delta''_D\right\|_K^2\right]\right\}^{1/2}. \label{separate}
\end{eqnarray}
Observe that
$$ \left(L_{K} +\lambda I\right)^{-1/2} \Delta'_D = \sum_{z\in D} \frac{1}{|D|} (y - f_\rho (x)) \left(L_{K} +\lambda I\right)^{-1/2}(K_x). $$
Each term in this expression is unbiased because $\int_{{\mathcal Y}} y- f_\rho (x)  d \rho(y|x) =0$. This unbiasedness and the independence tell us that
\begin{eqnarray}
\left\{E\left[\left\|\left(L_{K} +\lambda I\right)^{-1/2} \Delta'_D\right\|_K^2\right]\right\}^{1/2} &=& \left\{\frac{1}{|D|}
E\left[\left\|(y - f_\rho (x))\left[\left(L_{K} +\lambda I\right)^{-1/2}\right] (K_x)\right\|_K^2\right]\right\}^{1/2} \nonumber \\
&=& \left\{\frac{1}{|D|}
E\left[\sigma^2_\rho (x) \left\|\left[\left(L_{K} +\lambda I\right)^{-1/2}\right] (K_x)\right\|_K^2\right]\right\}^{1/2}. \label{DeltaD'}
\end{eqnarray}

If $\sigma^2_\rho \in L^\infty$, then $\sigma^2_\rho (x) \leq \left\|\sigma^2_\rho\right\|_\infty$ and by Lemma \ref{keynorms} we have
$$ \left\{E\left[\left\|\left(L_{K} +\lambda I\right)^{-1/2} \Delta'_D\right\|_K^2\right]\right\}^{1/2}  \leq \sqrt{\left\|\sigma^2_\rho\right\|_\infty} \sqrt{{\mathcal N}(\lambda)/|D|}. $$

If $\sigma^2_\rho \in L^p_{\rho_X}$ with $1\leq p <\infty$, we take $q= \frac{p}{p-1}$ ($q=\infty$ for $p=1$) satisfying $\frac{1}{p} + \frac{1}{q} =1$ and
apply the H\"older inequality $E[|\xi \eta|] \leq \left(E[|\xi|^p]\right)^{1/p} \left(E[|\eta|^{q}]\right)^{1/q}$ to $\xi = \sigma^2_\rho$, $\eta =\biggl\|\left(L_{K} +\lambda I\right)^{-1/2} (K_x)\biggr\|_K^2$ to find
$$
E\left[\sigma^2_\rho (x) \left\|\left[\left(L_{K} +\lambda I\right)^{-1/2}\right] (K_x)\right\|_K^2\right] \leq \left\|\sigma^2_\rho\right\|_{p}
\left\{E\left[\left\|\left[\left(L_{K} +\lambda I\right)^{-1/2}\right] (K_x)\right\|_K^{2q}\right]\right\}^{1/q}.
$$
But
$$ \left\|\left[\left(L_{K} +\lambda I\right)^{-1/2}\right] (K_x)\right\|_K^{2q-2} \leq \left(\kappa/\sqrt{\lambda}\right)^{2 q -2} $$
and $E\left[\biggl\|\left(L_{K} +\lambda I\right)^{-1/2} (K_x)\biggr\|_K^{2}\right]= {\mathcal N}(\lambda)$ by Lemma \ref{keynorms}. So we have
\begin{eqnarray*}
\left\{E\left[\left\|\left(L_{K} +\lambda I\right)^{-1/2} \Delta'_D\right\|_K^2\right]\right\}^{1/2}  &\leq& \left\{\frac{1}{|D|}
\left\|\sigma^2_\rho\right\|_{p} \left\{\frac{\kappa^{2q -2}}{\lambda^{q-1}} {\mathcal N}(\lambda)\right\}^{1/q}\right\}^{1/2} \\
&=&
\sqrt{\left\|\sigma^2_\rho\right\|_p} \kappa^{\frac{1}{p}} \left(\frac{{\mathcal N}(\lambda)}{|D|}\right)^{\frac{1}{2} (1-\frac{1}{p})} \left(\frac{1}{|D|\lambda}\right)^{\frac{1}{2p}}.
\end{eqnarray*}
Combining the above two cases, we know that for either $p=\infty$ or $p<\infty$,
$$ \left\{E\left[\left\|\left(L_{K} +\lambda I\right)^{-1/2} \Delta'_D\right\|_K^2\right]\right\}^{1/2} \leq
\sqrt{\left\|\sigma^2_\rho\right\|_p} \kappa^{\frac{1}{p}} \left(\frac{{\mathcal N}(\lambda)}{|D|}\right)^{\frac{1}{2} (1-\frac{1}{p})} \left(\frac{1}{|D|\lambda}\right)^{\frac{1}{2p}}.$$

The second term of (\ref{separate}) can be bounded easily as
\begin{eqnarray*}\left\{E\left[\left\|\left(L_{K} +\lambda I\right)^{-1/2} \Delta''_D\right\|_K^2\right]\right\}^{1/2} &\leq& \frac{1}{\sqrt{|D|}}
\left\{E\left[\left(f_\rho (x) - f_\lambda (x)\right)^2
\left\|\left(L_{K} +\lambda I\right)^{-1/2} (K_x) \right\|_K^2\right]\right\}^{1/2} \\
&\leq& \frac{1}{\sqrt{|D|}}
\left\{E\left[\left(f_\rho (x) - f_\lambda (x)\right)^2
\frac{\kappa^2}{\lambda}\right]\right\}^{1/2} = \frac{\kappa \|f_\rho - f_\lambda\|_\rho}{\sqrt{|D|\lambda}}.
\end{eqnarray*}
Putting the above estimates for the two terms of (\ref{separate}) into (\ref{firstdecom}) and applying Lemma \ref{keynormsoperator} to get
\begin{eqnarray*}
\left\{E\left[\left(1 + \frac{\Xi_D}{\sqrt{\lambda}} + \frac{\Xi_D^2}{\lambda}\right)^2\right]\right\}^{1/2} &\leq& 1 + \left\{E\left[\frac{\Xi_D^2}{\lambda}\right]\right\}^{1/2} +
 \left\{E\left[\frac{\Xi_D^4}{\lambda^2}\right]\right\}^{1/2} \\
 &\leq& 1 +\left\{\frac{\kappa^2 {\mathcal N}(\lambda)}{|D|\lambda}\right\}^{1/2} +
 \left\{\frac{49 {\mathcal B}_{|D|, \lambda}^4}{\lambda^2}\right\}^{1/2} \\
 &\leq& 1 +\frac{59\kappa^4}{(|D|\lambda)^2} +\frac{59\kappa^2 {\mathcal N}(\lambda)}{|D|\lambda},
\end{eqnarray*}
we know that $E\left[\left\|f_{D, \lambda} - f_{\lambda}\right\|_\rho\right]$ is bounded by
$$ \left(1 +\frac{59\kappa^4}{(|D|\lambda)^2} +\frac{59\kappa^2 {\mathcal N}(\lambda)}{|D|\lambda}\right) \left(\sqrt{\left\|\sigma^2_\rho\right\|_p} \kappa^{\frac{1}{p}} \left(\frac{{\mathcal N}(\lambda)}{|D|}\right)^{\frac{1}{2} (1-\frac{1}{p})} \left(\frac{1}{|D|\lambda}\right)^{\frac{1}{2p}} + \frac{\kappa \|f_\rho - f_\lambda\|_\rho}{\sqrt{|D|\lambda}}\right). $$
Then our desired error bound follows. The proof of the proposition is complete.
\end{proof}

{\par \noindent{\bf Proof of Theorem \ref{MainConfid}\ }}
Combining Proposition \ref{MainPropls} with the triangle inequality $\|f_{D, \lambda} - f_{\rho}\|_\rho \leq \|f_{D, \lambda} - f_{\lambda}\|_\rho + \|f_\lambda-f_\rho\|_\rho$, we know that
\begin{eqnarray*}
E\left[\left\|f_{D, \lambda} - f_{\rho}\right\|_\rho\right]
&\leq& \|f_\lambda-f_\rho\|_\rho + \left(1 + 59\kappa^4  +59\kappa^2\right) \left(1 +\frac{1}{(|D|\lambda)^2} +\frac{{\mathcal N}(\lambda)}{|D|\lambda}\right) \\
&& \left\{\kappa^{\frac{1}{p}}\sqrt{\left\|\sigma^2_\rho\right\|_p} \left(\frac{{\mathcal N}(\lambda)}{|D|}\right)^{\frac{1}{2} (1-\frac{1}{p})} \left(\frac{1}{|D|\lambda}\right)^{\frac{1}{2p}} + \frac{\kappa}{\sqrt{|D|\lambda}}  \|f_\lambda-f_\rho\|_\rho\right\}.
\end{eqnarray*}
Then the desired error bound holds true, and the proof of Theorem \ref{MainConfid} is complete.
\hfill \BlackBox

\medskip

{\par \noindent{\bf Proof of Corollary \ref{Specialpara}\ }}
By the definition of effective dimension,
$$ {\mathcal N}(\lambda) = \sum_\ell \frac{\lambda_\ell}{\lambda_\ell + \lambda} \geq \frac{\lambda_1}{\lambda_1 + \lambda}. $$
Combining this with the restriction (\ref{lambdacond}) with $m=1$, we find ${\mathcal N}(\lambda) \geq \frac{\lambda_1}{\lambda_1 + C_0}$ and
$N\lambda \geq \frac{\lambda_1}{(\lambda_1 + C_0) C_0}$. Putting these and the restriction (\ref{lambdacond}) with $m=1$ into the error bound (\ref{confboundGen}), we know that
\begin{eqnarray*}
&&E\left[\left\|f_{D, \lambda} - f_{\rho}\right\|_\rho\right] \leq \left(1 + 59\kappa^4  +59\kappa^2\right) (1 +\kappa) \left(1 +\frac{(\lambda_1 + C_0)^2C_0^2}{\lambda_1^2} +C_0\right) \nonumber \\
&&\biggl\{\biggl(1 +   \sqrt{(\lambda_1 + C_0) C_0/\lambda_1}\biggr) \|f_\lambda-f_\rho\|_\rho + \sqrt{\left\|\sigma^2_\rho\right\|_p} \left(\frac{{\mathcal N}(\lambda)}{N}\right)^{\frac{1}{2} (1-\frac{1}{p})} \left(\frac{1}{N\lambda}\right)^{\frac{1}{2p}}\biggr\}.
\end{eqnarray*}
Then the desired bound follows. The proof of Corollary \ref{Specialpara} is complete. \hfill
\BlackBox

To Prove Corollary \ref{MainMinmax}, we need the following bounds
\citep{Smale2007} for  $\|f_\lambda-f_\rho\|_\rho$ and
$\|f_\lambda-f_\rho\|_K$.

\begin{lemma}\label{Lemma: approximation error} Assume (\ref{approxr}) with
 $0<r\leq
1$.  There holds
\begin{equation}\label{approximation error rho}
          \|f_\lambda - f_\rho\|_\rho  \leq \lambda^{r} \|h_\rho\|_\rho.
\end{equation}
Furthermore, if $1/2\leq r\leq 1$, then we have
\begin{equation}\label{approximation error hk}
          \|f_\lambda - f_\rho\|_K  \leq \lambda^{r-1/2} \|h_\rho\|_\rho.
\end{equation}
\end{lemma}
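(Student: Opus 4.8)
The plan is to use the spectral calculus for the integral operator $L_K$ regarded as a compact positive operator on $L^2_{\rho_X}$, together with the explicit formula for $f_\lambda$. First I would recall the classical identity $f_\lambda = (L_K + \lambda I)^{-1} L_K f_\rho$, so that
\[
f_\lambda - f_\rho = \left[(L_K + \lambda I)^{-1} L_K - I\right] f_\rho = -\lambda (L_K + \lambda I)^{-1} f_\rho.
\]
Substituting the source condition (\ref{approxr}), namely $f_\rho = L_K^r(h_\rho)$ with $h_\rho = g_\rho \in L^2_{\rho_X}$, gives
\[
f_\lambda - f_\rho = -\lambda (L_K + \lambda I)^{-1} L_K^r h_\rho .
\]
The norm $\|f_\lambda - f_\rho\|_\rho$ is then controlled by the operator norm of $\lambda (L_K+\lambda I)^{-1} L_K^r$ on $L^2_{\rho_X}$, multiplied by $\|h_\rho\|_\rho$.

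The key step is the elementary scalar estimate: for every eigenvalue $t \ge 0$ of $L_K$ and every $0 < r \le 1$,
\[
\frac{\lambda\, t^{r}}{t + \lambda} \le \lambda^{r}.
\]
This follows by writing $\frac{\lambda t^r}{t+\lambda} = \lambda^r \cdot \frac{\lambda^{1-r} t^r}{t+\lambda}$ and checking that $\frac{\lambda^{1-r} t^{r}}{t+\lambda} \le 1$, e.g.\ by the weighted AM--GM inequality $t^{r}\lambda^{1-r} \le r t + (1-r)\lambda \le t+\lambda$. Since $L_K$ is positive and bounded on $L^2_{\rho_X}$, the spectral theorem upgrades this pointwise bound on the spectrum to the operator-norm bound $\|\lambda (L_K+\lambda I)^{-1} L_K^{r}\| \le \lambda^{r}$, which yields (\ref{approximation error rho}).

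For (\ref{approximation error hk}), I would measure the same vector in the $\mathcal H_K$-norm instead, using the identification $\|g\|_K = \|L_K^{-1/2} g\|_\rho$ valid on the range of $L_K^{1/2}$ (the reverse of (\ref{HKL2})). This reduces the claim to the scalar bound $\frac{\lambda\, t^{r-1/2}}{t+\lambda} \le \lambda^{r-1/2}$ for $\tfrac12 \le r \le 1$, which is exactly the previous inequality with $r$ replaced by $r - \tfrac12 \in [0,\tfrac12] \subset (0,1]$; one should check that $f_\lambda - f_\rho = -\lambda(L_K+\lambda I)^{-1}L_K^{r-1/2}(L_K^{1/2}h_\rho)$ indeed lies in the range of $L_K^{1/2}$ so that the $\mathcal H_K$-norm is finite, which holds since $r - \tfrac12 \ge 0$. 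The main (and only real) obstacle is bookkeeping the domains of the fractional powers and justifying the passage from the pointwise spectral inequality to the operator bound when $r < 1$, where $L_K^r$ is unbounded-free but the product $\lambda(L_K+\lambda I)^{-1}L_K^r$ is still bounded; everything else is the routine scalar calculus above. Since this is a known result (attributed to \citep{Smale2007}), I would simply cite it and present the short spectral argument for completeness.
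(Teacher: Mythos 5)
Your proposal is correct: the identity $f_\lambda-f_\rho=-\lambda(L_K+\lambda I)^{-1}f_\rho$ combined with the source condition and the scalar bound $\lambda t^{r}/(t+\lambda)\leq\lambda^{r}$ (and its shift by $1/2$ for the $\mathcal H_K$-norm via $\|L_K^{1/2}h\|_K=\|h\|_\rho$) is exactly the standard argument, and the paper itself gives no proof but simply cites \citep{Smale2007}, where this spectral calculation is carried out. The only cosmetic points are that $h_\rho$ in the lemma is the $g_\rho$ of (\ref{approxr}), and that for $r=1/2$ the exponent $r-1/2=0$ falls outside your stated range $(0,1]$ but the bound $\lambda/(t+\lambda)\leq 1$ is trivially true there, so nothing breaks.
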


{\par \noindent{\bf Proof of Corollary \ref{MainMinmax}\ }} It
follows from Lemma \ref{Lemma: approximation error} that   the
condition (\ref{approxr}) with $0< r \leq 1$ implies
\begin{eqnarray*}
\|f_\lambda - f_\rho\|_\rho
 \leq \lambda^{r} \|g_\rho\|_\rho.
\end{eqnarray*}
If
$${\mathcal N}(\lambda) \leq C_0 \lambda^{-\frac{1}{2\alpha}}, \qquad \forall \lambda >0 $$
for some constant $C_0 \geq 1$, then the choice $\lambda = N^{-\frac{2\alpha}{2 \alpha \max\{2r, 1\} +1}}$ yields
$$ \frac{{\mathcal N}(\lambda)}{N\lambda} \leq \frac{C_0 \lambda^{-\frac{1}{2\alpha} -1}}{N} = C_0 N^{\frac{2\alpha +1}{2\alpha \max\{2r, 1\}+1}-1} \leq C_0. $$
So (\ref{lambdacond}) with $m=1$ is satisfied. With this choice we also have
\begin{eqnarray*} \left(\frac{{\mathcal N}(\lambda)}{N}\right)^{\frac{1}{2} (1-\frac{1}{p})} \left(\frac{1}{N\lambda}\right)^{\frac{1}{2p}}
&\leq& C_0^{\frac{1}{2} (1-\frac{1}{p})}  N^{- \frac{2 \alpha \max\{2r, 1\}}{2 \alpha \max\{2r, 1\} +1} \frac{1}{2} (1-\frac{1}{p})}  N^{- \frac{2 \alpha \max\{2r, 1\} + 1 - 2 \alpha}{2 \alpha \max\{2r, 1\} +1} \frac{1}{2p}} \\
&=& C_0^{\frac{1}{2} (1-\frac{1}{p})}  N^{- \frac{\alpha \max\{2r, 1\}}{2 \alpha \max\{2r, 1\} +1} + \frac{1}{2p}  \frac{2 \alpha -1}{2 \alpha \max\{2r, 1\} +1}}.
\end{eqnarray*}
Putting these estimates into Corollary \ref{Specialpara}, we know that
\begin{eqnarray*}
E\left[\left\|f_{D, \lambda} - f_{\rho}\right\|_\rho\right]
&=& O\left(N^{-\frac{2\alpha r}{2 \alpha \max\{2r, 1\}+1}} + N^{- \frac{\alpha \max\{2r, 1\}}{2 \alpha \max\{2r, 1\} +1} + \frac{1}{2p}  \frac{2 \alpha -1}{2 \alpha \max\{2r, 1\} +1}}\right) \\
&=& O\left(N^{- \frac{\alpha \min\left\{2 r, \max\{2r, 1\}\right\}}{2 \alpha \max\{2r, 1\} +1} + \frac{1}{2p}  \frac{2 \alpha -1}{2 \alpha \max\{2r, 1\} +1}}\right).
\end{eqnarray*}
But we find
$$ \min\left\{2r, \max\{2r, 1\}\right\} = 2 r $$
by discussing the two different cases $0< r < \frac{1}{2}$ and $\frac{1}{2} \leq r \leq 1$.
Then our conclusion follows immediately. The proof of Corollary \ref{MainMinmax} is complete. \hfill
\BlackBox

\section{Proof of Error Bounds for the Distributed Learning Algorithm}\label{proofs}

In this section, we prove our first main result on the error $\overline{f}_{D, \lambda} - f_{D, \lambda}$ in the ${\mathcal H}_K$ metric and $L^2_{\rho}$ metric. The following result is more general, allowing different sizes for data subsets $\{D_j\}$.

\begin{theorem}\label{Mainexpectedgeneral}
Assume that for some constant $M>0$, $|y|\leq M$ almost surely. Then we have
\begin{eqnarray*}
&& E\left[\left\|\overline{f}_{D, \lambda} - f_{D, \lambda}\right\|_\rho\right] \leq C'_\kappa \left(\frac{1}{(N \lambda)^2} + \frac{{\mathcal N}(\lambda)}{N\lambda}\right) \biggl\{\frac{\|f_\lambda-f_\rho\|_\rho}{\sqrt{N\lambda}} \sum_{j=1}^m \left(\frac{|D|}{|D_j|}\right)^{\frac{3}{2}} + M \sqrt{\lambda} \sqrt{\frac{{\mathcal N}(\lambda)}{N \lambda}}\biggr\} \\
&&+  C'_\kappa M \sqrt{\lambda} \left\{\sum_{j=1}^m \left(\frac{|D_j|}{|D|}\right)^2
\left(\frac{1}{|D_j|^2 \lambda^2} + \frac{{\mathcal N}(\lambda)}{|D_j|\lambda}\right)^2 \left\{1+ \left(\frac{1}{|D_j|^2 \lambda^2} + \frac{{\mathcal N}(\lambda)}{|D_j|\lambda}\right)^2 \right\}\right\}^{1/2}
\end{eqnarray*}
and
\begin{eqnarray*}
&& E\left[\left\|\overline{f}_{D, \lambda} - f_{D, \lambda}\right\|_K\right] \leq C'_\kappa \left(\frac{1}{(N \lambda)^2} + \frac{{\mathcal N}(\lambda)}{N\lambda}\right) \biggl\{\frac{\|f_\lambda-f_\rho\|_\rho}{\sqrt{N}\lambda} \sum_{j=1}^m \left(\frac{|D|}{|D_j|}\right)^{\frac{3}{2}} + M \sqrt{\frac{{\mathcal N}(\lambda)}{N \lambda}}\biggr\} \\
&&+  C'_\kappa M  \left\{\sum_{j=1}^m \left(\frac{|D_j|}{|D|}\right)^2
\left(\frac{1}{|D_j|^2 \lambda^2} + \frac{{\mathcal N}(\lambda)}{|D_j|\lambda}\right)^2 \left\{1+ \left(\frac{1}{|D_j|^2 \lambda^2} + \frac{{\mathcal N}(\lambda)}{|D_j|\lambda}\right)^2 \right\}\right\}^{1/2},
\end{eqnarray*}
where $C'_{\kappa}$ is a constant depending only on $\kappa$.
\end{theorem}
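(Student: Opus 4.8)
The plan is to start from the representation of $\overline{f}_{D, \lambda} - f_{D, \lambda}$ given in Lemma \ref{difference}, namely
$$ \overline{f}_{D, \lambda} - f_{D, \lambda} = \sum_{j=1}^m \frac{|D_j|}{|D|} \left[Q_{D_j(x)}\right] \Delta'_j + \sum_{j=1}^m \frac{|D_j|}{|D|} \left[Q_{D_j(x)}\right] \Delta''_j - \left[Q_{D(x)}\right] \Delta_D, $$
and to estimate the expected norm of each of the three groups of terms separately, using the triangle inequality. For the norm we convert to ${\mathcal H}_K$-norms: for the $L^2_{\rho_X}$-estimate we use $\|g\|_\rho = \|L_K^{1/2} g\|_K$ as in the proof of Proposition \ref{MainPropls}, so that we must bound $\|L_K^{1/2} [Q_{D_j(x)}] \Delta'_j\|_K$ etc., while for the ${\mathcal H}_K$-estimate we bound the $\|\cdot\|_K$-norms directly. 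In both cases the engine is the second order decomposition (\ref{secondorder}) applied to each $Q_{D_j(x)}$ and to $Q_{D(x)}$, which writes each such operator as a sum of a term with one factor $L_K - L_{K, D_j(x)}$ and a term with two such factors (sandwiching $(L_{K, D_j(x)} + \lambda I)^{-1}$).

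The key steps, in order, would be: (i) Insert $(L_K + \lambda I)^{1/2}(L_K + \lambda I)^{-1/2}$ at the appropriate places so that every factor $L_K - L_{K, D_j(x)}$ becomes $(L_K + \lambda I)^{1/2}\big[(L_K + \lambda I)^{-1/2}(L_K - L_{K, D_j(x)})\big]$, and similarly every $\Delta'_j, \Delta''_j, \Delta_D$ gets multiplied by $(L_K + \lambda I)^{-1/2}$; use $\|L_K^{1/2}(L_K + \lambda I)^{-1/2}\| \leq 1$, $\|(L_K + \lambda I)^{-1/2}\| \leq 1/\sqrt{\lambda}$, $\|(L_{K, D_j(x)} + \lambda I)^{-1}\| \leq 1/\lambda$, and $\|(L_K + \lambda I)^{1/2}(L_{K, D_j(x)} + \lambda I)^{-1}(L_K + \lambda I)^{1/2}\|$ controlled via $\Xi_{D_j} = \|(L_K + \lambda I)^{-1/2}(L_K - L_{K, D_j(x)})\|$ — exactly as in (\ref{fDlambdafirst}). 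This yields a pointwise bound of each summand as a polynomial in the random quantities $\Xi_{D_j}/\sqrt{\lambda}$ times $\|(L_K + \lambda I)^{-1/2}\Delta'_j\|_K$, $\|(L_K + \lambda I)^{-1/2}\Delta''_j\|_K$, or $\|(L_K + \lambda I)^{-1/2}\Delta_D\|_K$. (ii) Take expectations: apply the Schwarz (or Hölder) inequality to split the expectation of the product into a product of moments, bound the $\Xi_{D_j}$-moments by Lemma \ref{keynormsoperator}(a) and (c) (the $(2d\Gamma(d)+1)^{1/d}{\mathcal B}_{|D_j|, \lambda}$ estimate, which is where the combinations $\frac{1}{|D_j|^2\lambda^2} + \frac{{\mathcal N}(\lambda)}{|D_j|\lambda}$ arise after recalling ${\mathcal B}_{|D_j|, \lambda} = \frac{2\kappa}{\sqrt{|D_j|}}(\frac{\kappa}{\sqrt{|D_j|\lambda}} + \sqrt{{\mathcal N}(\lambda)})$), and bound the $\Delta$-moments: $\{E\|(L_K + \lambda I)^{-1/2}\Delta'_j\|_K^2\}^{1/2} \leq M \kappa^{?}\sqrt{{\mathcal N}(\lambda)/|D_j|}$ via Lemma \ref{keynorms}(a) and the unbiasedness of $\xi_0$ (as in (\ref{DeltaD'})), $\{E\|(L_K + \lambda I)^{-1/2}\Delta''_j\|_K^2\}^{1/2} \leq \frac{\kappa\|f_\rho - f_\lambda\|_\rho}{\sqrt{|D_j|\lambda}}$, and similarly for $\Delta_D$ with $|D|$ in place of $|D_j|$. (iii) Handle the three groups: for the $\Delta'_j$-group, since $\xi_0$ is unbiased and the $D_j$ are independent, one can extract an extra cancellation — I would use a variance-type argument noting that the cross terms in $\|\sum_j \frac{|D_j|}{|D|}[Q_{D_j(x)}]\Delta'_j\|^2$ involving distinct $j$ either vanish or combine with the $-[Q_{D(x)}]\Delta_D$ term, because $\Delta_D = \sum_j \frac{|D_j|}{|D|}\Delta_j$ and the "leading" first-order part of $Q_{D_j(x)}$ acting on $\Delta'_j$ is close (in a summed sense) to the corresponding part of $Q_{D(x)}$ acting on $\Delta_D$; this is what produces the improved $\sqrt{{\mathcal N}(\lambda)/(N\lambda)}$ factor multiplying $M\sqrt{\lambda}$ rather than a factor of order $1$. (iv) Collect terms, using $\sum_j (|D_j|/|D|)^2 \leq \max_j |D_j|/|D| \leq 1$ and the specialization $|D_j| = N/m$ where needed, to reach the stated form; finally deduce Theorem \ref{Mainexpected} by substituting $|D_j| = N/m$ and simplifying (here $\sum_j (|D|/|D_j|)^{3/2} = m \cdot m^{3/2} = m^{5/2}$, giving the $m^{5/2}/\sqrt{N\lambda}$-type factor, and the last bracketed sum becomes $m \cdot \frac{1}{m^2}(\frac{m^2}{N^2\lambda^2} + \frac{m{\mathcal N}(\lambda)}{N\lambda})^2(1 + (\cdots)^2)$).

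The main obstacle I anticipate is step (iii): getting the sharp bound for $\|\sum_{j=1}^m \frac{|D_j|}{|D|}[Q_{D_j(x)}]\Delta'_j - [Q_{D(x)}]\Delta_D\|$ in expectation. A naive term-by-term triangle-inequality bound would give $\sum_j \frac{|D_j|}{|D|}\cdot(\text{order }\frac{{\mathcal N}(\lambda)}{N\lambda/m})\cdot M\sqrt{{\mathcal N}(\lambda)/(N/m)}$, which is too lossy by a factor of roughly $\sqrt{m}$ or more and would not be monotone decreasing in $m$ as claimed in Remark \ref{Remark:2}. The resolution is to not break the sum for the first-order (single-factor) parts of the $Q_{D_j(x)}$: one writes $[Q_{D_j(x)}]\Delta'_j = (L_K + \lambda I)^{-1}\{L_K - L_{K, D_j(x)}\}(L_K + \lambda I)^{-1}\Delta'_j + (\text{second order})$, and notes that after multiplying by $L_K^{1/2}$ the first-order piece, summed against $\frac{|D_j|}{|D|}$, must be compared with $[Q_{D(x)}]\Delta_D$'s first-order piece; the difference of first-order pieces is itself a sum of products of two independent mean-zero-ish fluctuations (one from $L_K - L_{K, D_j(x)}$, one from $\Delta'_j$), whose second moment I would expand explicitly, exploiting $E[\xi_0] = 0$ and independence across samples and across the $D_j$, to gain the extra $1/\sqrt{m}$ (equivalently $\sqrt{{\mathcal N}(\lambda)/(N\lambda)}$) factor. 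This second-moment expansion — keeping careful track of which cross terms survive — is the technical heart of the proof and the step where all the work lies; everything else is the routine norm bookkeeping already illustrated in the proof of Proposition \ref{MainPropls}.
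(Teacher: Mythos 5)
Your outline follows essentially the same route as the paper: the representation from Lemma \ref{difference}, the insertion of $(L_K+\lambda I)^{\pm 1/2}$ factors via the second order decomposition (\ref{secondorder}) leading to the bound (\ref{fDlambdafirst}), and Lemmas \ref{keynormsoperator} and \ref{keynorms} for the moments; the treatment of the $\Delta''_j$-group by Schwarz and the final specialization to $|D_j|=N/m$ also match. The one place where you depart is your step (iii), which you single out as the technical heart, and there you overcomplicate matters. The paper never combines the $\Delta'_j$-group with the $-[Q_{D(x)}]\Delta_D$ term and never compares first-order pieces: the term $J_3=-[L_K^{1/2}Q_{D(x)}]\Delta_D$ is estimated entirely on its own, exactly as in the proof of Proposition \ref{MainPropls} with $p=\infty$ and $\|\sigma^2_\rho\|_\infty\leq 4M^2$, and this alone is what produces the $M\sqrt{\lambda}\sqrt{{\mathcal N}(\lambda)/(N\lambda)}$ term in the statement --- it is not the result of a cancellation against the $\Delta'_j$-group. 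For $J_1=\sum_{j}\frac{|D_j|}{|D|}[L_K^{1/2}Q_{D_j(x)}]\Delta'_j$, the first of your two alternatives is the correct and sufficient one: conditioned on the $x$-coordinates each summand has mean zero (since $\int_{\mathcal Y}(y-f_\rho(x))d\rho(y|x)=0$ while $Q_{D_j(x)}$ depends only on the inputs), and summands for distinct $j$ are independent, so all cross terms in $E\|J_1\|_K^2$ vanish and $E\|J_1\|_K^2=\sum_j(|D_j|/|D|)^2E\|[L_K^{1/2}Q_{D_j(x)}]\Delta'_j\|_K^2$; this orthogonality is the entire source of the $\ell^2$-aggregation (your ``extra $1/\sqrt{m}$''), and no comparison with the first-order part of $Q_{D(x)}\Delta_D$ is needed. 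The remaining single-block expectation is a product of two dependent random factors, which the paper controls not by Cauchy--Schwarz on moments but by intersecting the high-probability events of Lemma \ref{keynormsoperator}(b) and Lemma \ref{keynorms}(b) and integrating the tail via $E[\xi]=\int_0^\infty\mathrm{Prob}[\xi>t]dt$; your Schwarz-plus-higher-moment variant would also work, since bounds of the type in Lemma \ref{keynormsoperator}(c) supply the required moments.
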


\begin{proof}
Recall (\ref{expressdiffLK}) in Lemma \ref{difference}. It enables us to express
\begin{equation}\label{fDrhodecom}
L_K^{1/2}\left\{\overline{f}_{D, \lambda} - f_{D, \lambda}\right\} = J_1 + J_2 + J_3,
\end{equation}
where the terms $J_1, J_2, J_3$ are given by
$$J_1 = \sum_{j=1}^m \frac{|D_j|}{|D|} \left[L_K^{1/2}Q_{D_j(x)}\right] \Delta'_j, \ J_2 = \sum_{j=1}^m \frac{|D_j|}{|D|} \left[L_K^{1/2}Q_{D_j(x)}\right] \Delta''_j, \ J_3 = -\left[L_K^{1/2}Q_{D(x)}\right] \Delta_D. $$
These three terms will be dealt with separately in the following.

For the first term $J_1$ of (\ref{fDrhodecom}), each summand with $j\in \{1, \ldots, m\}$ can be expressed as $\sum_{z\in D_j} \frac{1}{|D|} (y - f_\rho (x))\left[L_K^{1/2}Q_{D_j(x)}\right] (K_x)$, and is unbiased because $\int_{{\mathcal Y}} y- f_\rho (x)  d \rho(y|x) =0$. The unbiasedness and the independence tell us that
\begin{eqnarray*}
&& E\left[\left\|J_1\right\|_K\right] \leq \left\{E\left[\left\|J_1\right\|_K^2\right]\right\}^{1/2} \leq \left\{\sum_{j=1}^m \left(\frac{|D_j|}{|D|}\right)^2
E\left[\left\|\left[L_K^{1/2}Q_{D_j(x)}\right] \Delta'_j\right\|_K^2\right]\right\}^{1/2}.
\end{eqnarray*}
Let $j\in \{1, \ldots, m\}$. The relation (\ref{fDlambdafirst}) derived from the second order decomposition (\ref{secondorder}) in the proof of Proposition \ref{MainPropls} yields
\begin{equation}\label{primeone}
\left\|\left[L_K^{1/2}Q_{D_j(x)}\right] \Delta'_j\right\|_K^2 \leq \left(\frac{\Xi_{D_j}}{\sqrt{\lambda}} + \frac{\Xi_{D_j}^2}{\lambda}\right)^2 \left\|\left(L_{K} +\lambda I\right)^{-1/2} \Delta'_j\right\|_K^2.
\end{equation}
Now we apply the formula
\begin{equation}\label{Expectedform}
E[\xi] =\int_0^\infty \hbox{Prob}\left[\xi > t\right] d t
\end{equation}
to estimate the expected value of (\ref{primeone}). By Part (b) of
Lemma \ref{keynormsoperator}, for $0< \delta <1$, there exists a
subset ${\mathcal Z}^{|D_j|}_{\delta, 1}$ of ${\mathcal Z}^{|D_j|}$
of measure at least $1-\delta$ such that
\begin{equation}\label{XiDjest}
\Xi_{D_j} \leq {\mathcal B}_{|D_j|, \lambda} \log \bigl(2/\delta\bigr), \qquad \forall D_j \in {\mathcal Z}^{|D_j|}_{\delta, 1}.
\end{equation}
Applying Part (b) of Lemma \ref{keynorms} to $g(z) = y- f_\rho (x)$ with $\|g\|_\infty \leq 2 M$ and the data subset $D_j$, we know that there exists another subset ${\mathcal Z}^{|D_j|}_{\delta, 2}$ of ${\mathcal Z}^{|D_j|}$ of measure at least $1-\delta$ such that
$$
\left\|\left(L_{K} +\lambda I\right)^{-1/2} \Delta'_j\right\|_K \leq \frac{2 M}{\kappa} {\mathcal B}_{|D_j|, \lambda} \log \bigl(2/\delta\bigr), \qquad \forall D_j \in {\mathcal Z}^{|D_j|}_{\delta, 2}.
$$
Combining this with (\ref{XiDjest}) and (\ref{primeone}), we know that for $D_j \in {\mathcal Z}^{|D_j|}_{\delta, 1} \cap {\mathcal Z}^{|D_j|}_{\delta, 2}$,
$$
\left\|\left[L_K^{1/2}Q_{D_j(x)}\right] \Delta'_j\right\|_K^2 \leq  \left(\frac{{\mathcal B}_{|D_j|, \lambda}^2}{\lambda} + \frac{{\mathcal B}_{|D_j|, \lambda}^4}{\lambda^2}\right) \left(\frac{M}{\kappa}\right)^2 {\mathcal B}_{|D_j|, \lambda}^2 \left(2 \log \bigl(2/\delta\bigr)\right)^6.
$$
Since the measure of the set ${\mathcal Z}^{|D_j|}_{\delta, 1} \cap {\mathcal Z}^{|D_j|}_{\delta, 2}$ is at least $1-2\delta$, by denoting
$${\mathcal C}_{|D_j|, \lambda} =64 \left(\frac{{\mathcal B}_{|D_j|, \lambda}^2}{\lambda} + \frac{{\mathcal B}_{|D_j|, \lambda}^4}{\lambda^2}\right) \left(\frac{M}{\kappa}\right)^2 {\mathcal B}_{|D_j|, \lambda}^2, $$
we see that
$$  \hbox{Prob}\left[\left\|\left[L_K^{1/2}Q_{D_j(x)}\right] \Delta'_j\right\|_K^2 > {\mathcal C}_{|D_j|, \lambda} \left(\log \bigl(2/\delta\bigr)\right)^6\right] \leq 2 \delta. $$
For $0< t <\infty$, the equation ${\mathcal C}_{|D_j|, \lambda} \left(\log \bigl(2/\delta\bigr)\right)^6 =t$ has the solution
$$\delta_t = 2 \exp\left\{-\left(t/{\mathcal C}_{|D_j|, \lambda}\right)^{1/6}\right\}. $$
When $\delta_t <1$, we have
$$ \hbox{Prob}\left[\left\|\left[L_K^{1/2}Q_{D_j(x)}\right] \Delta'_j\right\|_K^2 > t\right] \leq 2 \delta_t = 4 \exp\left\{-\left(t/{\mathcal C}_{|D_j|, \lambda}\right)^{1/6}\right\}. $$
This inequality holds trivially when $\delta_t \geq 1$ since the probability is at most $1$. Thus we can apply the formula (\ref{Expectedform}) to  the nonnegative random variable $\xi = \left\|\left[L_K^{1/2}Q_{D_j(x)}\right] \Delta'_j\right\|_K^2$ and obtain
$$ E\left[\left\|\left[L_K^{1/2}Q_{D_j(x)}\right] \Delta'_j\right\|_K^2\right] = \int_0^\infty \hbox{Prob}\left[\xi > t\right] d t \leq \int_0^\infty 4 \exp\left\{-\left(t/{\mathcal C}_{|D_j|, \lambda}\right)^{1/6}\right\} d t $$
which equals $24 \Gamma (6) {\mathcal C}_{|D_j|, \lambda}$. Therefore,
\begin{eqnarray*}
&& E\left[\left\|J_1\right\|_K\right] \leq \left\{\sum_{j=1}^m \left(\frac{|D_j|}{|D|}\right)^2
24 \Gamma (6) {\mathcal C}_{|D_j|, \lambda} \right\}^{1/2} \\
&& \leq 1536\sqrt{5} \kappa M \left\{\sum_{j=1}^m \left(\frac{|D_j|}{|D|}\right)^2 \lambda
\left(\frac{\kappa^2}{|D_j|^2 \lambda^2} + \frac{{\mathcal N}(\lambda)}{|D_j|\lambda}\right)^2 \left\{1+ 8 \kappa^2 \left(\frac{\kappa^2}{|D_j|^2 \lambda^2} + \frac{{\mathcal N}(\lambda)}{|D_j|\lambda}\right)^2 \right\}\right\}^{1/2}.
\end{eqnarray*}

For the second term $J_2$ of (\ref{fDrhodecom}), we use the second order decomposition (\ref{secondorder}) again and obtain
$$
\left\|\left[L_K^{1/2}Q_{D_j(x)}\right] \Delta''_j\right\|_K \leq \left(\frac{\Xi_{D_j}}{\sqrt{\lambda}} + \frac{\Xi_{D_j}^2}{\lambda}\right) \left\|\left(L_{K} +\lambda I\right)^{-1/2} \Delta''_j\right\|_K.
$$
Applying the Schwarz inequality and Lemmas \ref{keynormsoperator} and \ref{keynorms}, we get
\begin{eqnarray*}E\left[\left\|\left[L_K^{1/2}Q_{D_j(x)}\right] \Delta''_j\right\|_K\right]  &\leq& \left\{E\left[\left(\frac{\Xi_{D_j}}{\sqrt{\lambda}} + \frac{\Xi_{D_j}^2}{\lambda}\right)^2\right]\right\}^{1/2} \\
 && \frac{1}{\sqrt{|D_j|}}
\left\{E\left[\left(f_\rho (x) - f_\lambda (x)\right)^2
\left\|\left(L_{K} +\lambda I\right)^{-1/2} (K_x) \right\|_K^2\right]\right\}^{1/2} \\
&\leq& \left(\left\{\frac{\kappa^2 {\mathcal N}(\lambda)}{|D_j|\lambda}\right\}^{1/2} +
 \left\{\frac{49 {\mathcal B}_{|D_j|, \lambda}^4}{\lambda^2}\right\}^{1/2}\right) \frac{\kappa \|f_\rho - f_\lambda\|_\rho}{\sqrt{|D_j|\lambda}} \\
&\leq&  \left(\frac{59\kappa^4}{(|D_j|\lambda)^2} +\frac{59\kappa^2
{\mathcal N}(\lambda)}{|D_j|\lambda}\right)\frac{\kappa \|f_\rho -
f_\lambda\|_\rho}{\sqrt{|D_j|\lambda}}.
\end{eqnarray*}
It follows that
$$ E\left[\left\|J_2\right\|_K\right]
\leq  \left(\frac{59\kappa^4}{(|D| \lambda)^2} +\frac{59\kappa^2
{\mathcal N}(\lambda)}{|D|\lambda}\right) \sum_{j=1}^m
2\frac{|D|}{|D_j|} \frac{\kappa \|f_\rho -
f_\lambda\|_\rho}{\sqrt{|D_j|\lambda}}. $$

The last term $J_3$ of (\ref{fDrhodecom}) has been handled in the proof of Proposition \ref{MainPropls} by ignoring the summand $\left(L_{K}+\lambda
      I\right)^{-1} \Delta_D$ in the expression for $f_{D, \lambda} - f_{\lambda}$, and we find from the trivial bound $\left\|\sigma^2_\rho\right\|_\infty \leq 4M^2$ with $p=\infty$ that
$$E\left[\left\|J_3\right\|_K\right]
\leq  \left(\frac{59\kappa^4}{(|D|\lambda)^2} +\frac{59\kappa^2
{\mathcal N}(\lambda)}{|D|\lambda}\right) \left(2M
\left(\frac{{\mathcal N}(\lambda)}{|D|}\right)^{\frac{1}{2}}  +
\frac{\kappa \|f_\rho - f_\lambda\|_\rho}{\sqrt{|D|\lambda}}\right).
$$ Combining the above estimates for the three terms of
(\ref{fDrhodecom}), we see that the desired error bound in the
$L^2_{\rho_X}$ metric holds true.

The estimate in the ${\mathcal H}_K$ metric follows from the steps in deriving the error bound in the $L^2_{\rho_X}$ metric except that in the representation (\ref{fDrhodecom}) the operator
$L_K^{1/2}$ in the front disappears. This change gives an additional factor $1/\sqrt{\lambda}$, the bound for the operator $\left(L_{K} +\lambda I\right)^{-1/2}$, and proves the desired error bound in the ${\mathcal H}_K$ metric.
\end{proof}

\medskip

{\par \noindent{\bf Proof of Theorem \ref{Mainexpected}\ }}
Since $|D_j| = \frac{N}{m}$ for $j=1, \ldots, m$, the bound in Theorem \ref{Mainexpectedgeneral} in the $L^2_{\rho_X}$ metric can be simplified as
\begin{eqnarray*}
E\left[\left\|\overline{f}_{D, \lambda} - f_{D, \lambda}\right\|_\rho\right] &\leq& C'_\kappa \left(\frac{1}{(N \lambda)^2} + \frac{{\mathcal N}(\lambda)}{N\lambda}\right) \biggl\{\frac{\|f_\lambda-f_\rho\|_\rho}{\sqrt{N\lambda}} m^{\frac{5}{2}} + M \sqrt{\lambda} \sqrt{\frac{{\mathcal N}(\lambda)}{N \lambda}}\biggr\} \\
&&+  C'_\kappa M \frac{\sqrt{\lambda}}{\sqrt{m}}
\left(\frac{m^2}{N^2 \lambda^2} + \frac{m{\mathcal N}(\lambda)}{N\lambda}\right) \left\{1+ \left(\frac{m^2}{N^2 \lambda^2} + \frac{m{\mathcal N}(\lambda)}{N\lambda}\right)\right\} \\
&\leq& C'_\kappa \left(\frac{m}{(N \lambda)^2} + \frac{{\mathcal N}(\lambda)}{N\lambda}\right) \biggl\{\frac{\|f_\lambda-f_\rho\|_\rho}{\sqrt{N\lambda}} m^{\frac{5}{2}} + M \sqrt{\lambda} \sqrt{\frac{{\mathcal N}(\lambda)}{N \lambda}}  \\
&&+  M \sqrt{\lambda} \sqrt{m} \left\{1+ \left(\frac{m^2}{N^2 \lambda^2} + \frac{m{\mathcal N}(\lambda)}{N\lambda}\right)\right\}\biggr\}.
\end{eqnarray*}
Notice that the term
$\sqrt{\frac{{\mathcal N}(\lambda)}{N \lambda}}$ can be bounded by $1+ \frac{m{\mathcal N}(\lambda)}{N\lambda}$.
Then the desired error bound in the $L^2_{\rho_X}$ metric with $C_\kappa =  2 C'_\kappa$ follows. The proof for the error bound in the ${\mathcal H}_K$ metric is similar.
The proof of Theorem \ref{Mainexpected} is complete. \hfill
\BlackBox

\medskip

{\par \noindent{\bf Proof of Corollary \ref{MainEqualsize}\ }}
As in the proof of Corollary \ref{Specialpara}, the restriction (\ref{lambdacond}) implies ${\mathcal N}(\lambda) \geq \frac{\lambda_1}{\lambda_1 + C_0}$ and
$N\lambda \geq \frac{m \lambda_1}{(\lambda_1 + C_0) C_0}$. It follows that
$$ \frac{m}{(N \lambda)^2} \leq \frac{(\lambda_1 + C_0) C_0}{\lambda_1} \frac{1}{N\lambda} \leq \frac{(\lambda_1 + C_0)^2 C_0^2}{\lambda_1^2} \frac{{\mathcal N}(\lambda)}{N\lambda}. $$
Putting these bounds into Theorem \ref{Mainexpected}, we know that the expected value $E\left\|\overline{f}_{D, \lambda} - f_{D, \lambda}\right\|_\rho$ is bounded by
\begin{eqnarray*}
 && C_\kappa \left(\frac{(\lambda_1 + C_0)^2 C_0^2}{\lambda_1^2} + 1\right) \frac{{\mathcal N}(\lambda)}{N\lambda} \sqrt{m} \biggl\{\frac{\|f_\lambda-f_\rho\|_\rho}{\sqrt{N\lambda}} m^2 +  M \sqrt{\lambda} \left\{1+ \frac{(\lambda_1 + C_0)^2 C_0^2}{\lambda_1^2} + C_0\right\}\biggr\} \\
&\leq& \widetilde{C}_\kappa \frac{m{\mathcal N}(\lambda)}{N\lambda} \biggl(\|f_\lambda-f_\rho\|_\rho \frac{m \sqrt{m}}{\sqrt{N\lambda}} +  M \frac{\sqrt{\lambda}}{\sqrt{m}}\biggr),
\end{eqnarray*}
and
$$ E\left\|\overline{f}_{D, \lambda} - f_{D, \lambda}\right\|_K
\leq \widetilde{C}_\kappa \frac{m{\mathcal N}(\lambda)}{N\lambda} \biggl(\|f_\lambda-f_\rho\|_\rho \frac{m \sqrt{m}}{\sqrt{N}\lambda} +  \frac{M}{\sqrt{m}}\biggr),$$
where
$$ \widetilde{C}_\kappa := C_\kappa \left(\frac{(\lambda_1 + C_0)^2 C_0^2}{\lambda_1^2} + 1\right)\left\{1+ \frac{(\lambda_1 + C_0)^2 C_0^2}{\lambda_1^2} + C_0\right\}. $$
This proves Corollary \ref{MainEqualsize}. \hfill
\BlackBox

\medskip

{\par \noindent{\bf Proof of Corollary \ref{SpecialEqualsize}\ }}
If
$${\mathcal N}(\lambda) \leq C_0 \lambda^{-\frac{1}{2\alpha}}, \qquad \forall \lambda >0 $$
for some constant $C_0 \geq 1$, then the choice $\lambda = \left(\frac{m}{N}\right)^{\frac{2\alpha}{2 \alpha \max\{2r, 1\} +1}}$ satisfies (\ref{lambdacond}). With this choice we also have
\begin{eqnarray*} \frac{m{\mathcal N}(\lambda)}{N\lambda} \leq
C_0 \left(\frac{m}{N}\right)^{\frac{2\alpha (\max\{2r, 1\}-1)}{2 \alpha \max\{2r, 1\} +1}}.
\end{eqnarray*}
Since the condition (\ref{approxr}) yields
$\|f_\lambda - f_\rho\|_\rho \leq \|g_\rho\|_\rho \lambda^{r}$, we have by Corollary \ref{MainEqualsize},
\begin{eqnarray*}
E\left\|\overline{f}_{D, \lambda} - f_{D, \lambda}\right\|_\rho &\leq& \widetilde{C}_\kappa C_0 \left(\frac{m}{N}\right)^{\frac{2\alpha (\max\{2r, 1\}-1)}{2 \alpha \max\{2r, 1\} +1}} \biggl(\|g_\rho\|_\rho \left(\frac{m}{N}\right)^{\frac{2\alpha}{2 \alpha \max\{2r, 1\} +1}(r-\frac{1}{2})} \frac{m \sqrt{m}}{\sqrt{N}} \\
&& +  M N^{-\frac{\alpha}{2 \alpha \max\{2r, 1\}+1}} m^{-\frac{2\alpha (\max\{2r, 1\}-1) +1}{2 + 4 \alpha\max\{2r, 1\}}}\biggr).
\end{eqnarray*}
The inequality $\left(\frac{m}{N}\right)^{\frac{2\alpha}{2 \alpha \max\{2r, 1\} +1}(r-\frac{1}{2})} \frac{m \sqrt{m}}{\sqrt{N}}  \leq N^{-\frac{\alpha}{2 \alpha \max\{2r, 1\}+1}} m^{-\frac{2\alpha (\max\{2r, 1\}-1) +1}{2 + 4 \alpha\max\{2r, 1\}}}$ is equivalent to
$$m^{\frac{3}{2} +\frac{2\alpha (\max\{2r, 1\}-1) +1}{2 + 4 \alpha\max\{2r, 1\}} + \frac{2\alpha}{2 \alpha \max\{2r, 1\} +1}(r-\frac{1}{2})} \leq N^{\frac{1}{2}-\frac{\alpha}{2 \alpha \max\{2r, 1\}+1} + \frac{2\alpha}{2 \alpha \max\{2r, 1\} +1}(r-\frac{1}{2})}$$ and it can be expressed as (\ref{mrestrict}). Since (\ref{mrestrict}) is valid, we have
$$ E\left\|\overline{f}_{D, \lambda} - f_{D, \lambda}\right\|_\rho \leq \widetilde{C}_\kappa C_0 \left(\frac{m}{N}\right)^{\frac{2\alpha (\max\{2r, 1\}-1)}{2 \alpha \max\{2r, 1\} +1}} \biggl(\|g_\rho\|_\rho +  M\biggr) N^{-\frac{\alpha}{2 \alpha \max\{2r, 1\}+1}} m^{-\frac{2\alpha (\max\{2r, 1\}-1) +1}{2 + 4 \alpha\max\{2r, 1\}}}. $$
This proves the first desired convergence rate. The second rate follows easily. This proves Corollary \ref{SpecialEqualsize}. \hfill
\BlackBox

\medskip

{\par \noindent{\bf Proof of Corollary \ref{Finalrate}\ }} By Corollary \ref{MainMinmax}, with the choice $\lambda = N^{-\frac{2\alpha}{4\alpha r +1}}$, we can immediately bound $\left\|f_{D, \lambda} - f_{\rho}\right\|_\rho$ as
$$
E\left[\left\|f_{D, \lambda} - f_{\rho}\right\|_\rho\right] = O\left(N^{-\frac{2 \alpha r}{4 \alpha r +1}}\right).
$$

The assumption ${\mathcal N}(\lambda) = O(\lambda^{-\frac{1}{2\alpha}})$ tells us that for some constant $C_0 \geq 1$,
$${\mathcal N}(\lambda) \leq C_0 \lambda^{-\frac{1}{2\alpha}}, \qquad \forall \lambda >0. $$
So the choice $\lambda = N^{-\frac{2\alpha}{4 \alpha r +1}}$ yields
\begin{equation}\label{restrictmcor}
 \frac{m {\mathcal N}(\lambda)}{N\lambda} \leq C_0 \frac{m \lambda^{-\frac{1 + 2 \alpha}{2\alpha}}}{N} =  C_0 m N^{\frac{1 + 2 \alpha}{4 \alpha r +1}-1} =  C_0 m N^{\frac{2 \alpha(1-2r)}{4 \alpha r +1}}.
\end{equation}
If $m$ satisfies
\begin{equation}\label{restrictmcor}
m \leq N^{\frac{2 \alpha(2r-1)}{4 \alpha r +1}},
\end{equation}
then (\ref{lambdacond}) is valid, and by Corollary \ref{MainEqualsize},
\begin{eqnarray*}
E\left\|\overline{f}_{D, \lambda} - f_{D, \lambda}\right\|_\rho &\leq& \widetilde{C}_\kappa \frac{m{\mathcal N}(\lambda)}{N\lambda} \biggl(\|f_\lambda-f_\rho\|_\rho  \frac{m \sqrt{m}}{\sqrt{N\lambda}} +  M \frac{\sqrt{\lambda}}{\sqrt{m}}\biggr) \\
&\leq& \widetilde{C}_\kappa  C_0 m N^{\frac{2 \alpha(1-2r)}{4 \alpha r +1}} \biggl(\lambda^r \|g_\lambda\|_\rho  \frac{m \sqrt{m}}{\sqrt{N\lambda}} +  M \frac{\sqrt{\lambda}}{\sqrt{m}}\biggr) \\
&\leq& \widetilde{C}_\kappa  C_0 \biggl(\|g_\lambda\|_\rho  +  M\biggr) \lambda^r \left(\frac{m^{\frac{5}{2}} N^{\frac{2 \alpha(1-2r)}{4 \alpha r +1}}}{\sqrt{N\lambda}} + \sqrt{m} N^{\frac{2 \alpha(1-2r)}{4 \alpha r +1}} \lambda^{\frac{1}{2} -r}\right) \\
&=& \widetilde{C}_\kappa  C_0 \biggl(\|g_\lambda\|_\rho  +  M\biggr) N^{-\frac{2 \alpha r}{4\alpha r +1}} \left(m^{\frac{5}{2}} N^{-\frac{3 \alpha(2r-1) +\frac{1}{2}}{4 \alpha r +1}} + \sqrt{m} N^{-\frac{\alpha(2r -1)}{4 \alpha r +1}}\right).
\end{eqnarray*}
Thus, when $m$ satisfies
\begin{equation}\label{restrictmIIcor}
m \leq N^{\frac{6 \alpha(2r-1) +1}{5(4 \alpha r +1)}}, \qquad m \leq N^{\frac{2\alpha(2r -1)}{4 \alpha r +1}},
\end{equation}
we have
$$ E\left\|\overline{f}_{D, \lambda} - f_{D, \lambda}\right\|_\rho \leq 2 \widetilde{C}_\kappa  C_0 \biggl(\|g_\lambda\|_\rho  +  M\biggr) N^{-\frac{2 \alpha r}{4\alpha r +1}}, $$
and thereby
$$ E\left[\left\|\overline{f}_{D, \lambda} - f_{\rho}\right\|_\rho\right] = O\left(N^{-\frac{2 \alpha r}{4 \alpha r +1}}\right).
$$
Finally, we notice that (\ref{restrictmfinal}) is equivalent to the combination of (\ref{restrictmcor}) and (\ref{restrictmIIcor}). So our conclusion follows.  This proves Corollary \ref{Finalrate}. \hfill
\BlackBox

\section*{Appendix}
To estimate  norms of various operators involving the approximation
of $L_K$ by $L_{K, D(x)}$, we need the following probability
inequality for vector-valued random variables in
\citep{Pinelis1994}.

\begin{lemma}\label{Pinelislemma}
For a random variable $\xi$ on $({\mathcal Z}, \rho)$ with values in
a Hilbert space $(H, \|\cdot\|)$ satisfying $\|\xi\| \leq
\widetilde{M} <\infty$ almost surely, and a random sample
$\{z_i\}_{i=1}^s$ independent drawn according to $\rho$, there holds
with confidence $1-\widetilde{\delta}$,
\begin{equation}\label{Pinelisthm}
\biggl\|{1 \over s} \sum_{i=1}^s \bigl[\xi (z_i) -
E(\xi)\bigr]\biggr\| \leq {2 \widetilde{M} \log
\bigl(2/\widetilde{\delta}\bigr) \over s}  + \sqrt{{2 E (\|\xi\|^2)
\log \bigl(2/\widetilde{\delta}\bigr) \over s}}.
\end{equation}
\end{lemma}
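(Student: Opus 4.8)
\medskip
\noindent\textbf{Proof proposal.} The plan is to establish this Bernstein-type deviation bound for a Hilbert-space-valued sample mean (originally due to Pinelis) by the classical three-step scheme: reduce to centered partial sums, prove an exponential moment bound for their norm, and then optimize inside Markov's inequality. First I would put $\eta_i = \xi(z_i) - E(\xi)$ and $S_k = \sum_{i=1}^k \eta_i$, so that $E(\eta_i) = 0$ and $\|\eta_i\| \le 2\widetilde M$ almost surely; moreover the Hilbert-space Pythagorean identity $E\|\eta_i\|^2 = E(\|\xi\|^2) - \|E(\xi)\|^2$ gives $E\|\eta_i\|^2 \le E(\|\xi\|^2) =: \sigma^2$. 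The target is then a tail bound for $\frac{1}{s}\|S_s\|$.

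The core step, and the only one genuinely specific to the vector-valued setting, is the estimate
\[
E\big[\exp(\lambda\|S_s\|)\big] \le 2\exp\!\big(g(\lambda)\, V_s\big), \qquad V_s = \sum_{i=1}^s E\|\eta_i\|^2 \le s\sigma^2,
\]
valid for all $\lambda > 0$, where $g(\lambda) = (e^{\lambda b} - 1 - \lambda b)/b^2$ with $b = 2\widetilde M$. Following Pinelis, this is proved by induction on the number of summands: the passage from $S_{k-1}$ to $S_k$ rests on the $2$-smoothness of the Hilbert norm, i.e.\ the identity $\|S_k\|^2 = \|S_{k-1}\|^2 + 2\langle S_{k-1}, \eta_k\rangle + \|\eta_k\|^2$, together with the unbiasedness $E[\langle S_{k-1}, \eta_k\rangle \mid S_{k-1}] = 0$ (valid since $\eta_k$ is independent of $S_{k-1}$) and the Bernstein-type moment domination $E\|\eta_k\|^\ell \le \frac{\ell!}{2}\, b^{\ell-2}\, E\|\eta_k\|^2$ for $\ell \ge 2$, which follows from $\|\eta_k\| \le b$. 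I expect this to be the main obstacle: a naive dimension reduction combined with scalar Bernstein over an $\varepsilon$-net of the unit ball fails here, because the net has cardinality exponential in $s$ and contributes a term to the exponent that does not vanish upon division by $s$; one really has to exploit the smoothness of the norm to keep the coefficient of $\log(2/\widetilde\delta)$ of the correct order.

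Granting the exponential bound, the conclusion follows by a routine optimization. By Markov's inequality, $\mathrm{Prob}[\|S_s\| > u] \le 2\exp\!\big(g(\lambda)V_s - \lambda u\big)$ for every $\lambda > 0$. Using the elementary bound $g(\lambda) \le \frac{\lambda^2/2}{1 - \lambda b/3}$ on $0 < \lambda < 3/b$ and choosing $\lambda$ to balance the two terms yields the Bernstein form $\mathrm{Prob}[\frac{1}{s}\|S_s\| > t] \le 2\exp\!\big(-\frac{s t^2}{2(\sigma^2 + b t/3)}\big)$. Setting the right-hand side equal to $\widetilde\delta$, solving the resulting quadratic for $t$, and bounding $\sqrt{a + b'} \le \sqrt{a} + \sqrt{b'}$, one gets $\frac{1}{s}\|S_s\| \le \frac{4\widetilde M \log(2/\widetilde\delta)}{3 s} + \sqrt{\frac{2 E(\|\xi\|^2)\log(2/\widetilde\delta)}{s}}$ with confidence $1 - \widetilde\delta$; since $4/3 \le 2$, this is stronger than (\ref{Pinelisthm}), so no further sharpening of the optimization is needed.
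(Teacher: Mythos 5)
The paper does not prove this lemma at all: it is quoted verbatim as a known concentration inequality for Hilbert-space-valued random variables and attributed to Pinelis (1994), so there is no in-paper argument to compare against. Your proposal reconstructs essentially the original route: reduce to the centered sums $S_k$, establish Pinelis's exponential moment bound $E[\exp(\lambda\|S_s\|)]\le 2\exp(g(\lambda)V_s)$ via the $2$-smoothness of the Hilbert norm, and then run the standard Bernstein optimization. Your final computation is correct and in fact yields the slightly sharper constant $4\widetilde M/(3s)$ in place of $2\widetilde M/s$, so the stated inequality follows. Your remark that an $\varepsilon$-net reduction to the scalar case fails is also apt; this is precisely why the vector-valued result is not a triviality.

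The one place where your sketch is thinner than a proof is the induction step for the exponential moment bound. Expanding $\|S_k\|^2=\|S_{k-1}\|^2+2\langle S_{k-1},\eta_k\rangle+\|\eta_k\|^2$ and invoking conditional unbiasedness controls moments of $\|S_k\|^2$, but the quantity you need to propagate is $E[\exp(\lambda\|S_k\|)]$ (norm, not norm squared), and passing from the quadratic identity to the exponential of the norm is the genuinely delicate part of Pinelis's argument: his proof works with $E[\cosh(\lambda\|S_k\|)]$ and uses careful estimates on the derivatives of $\cosh$ composed with the norm, which is also where the leading factor $2$ originates. As written, your induction step asserts the conclusion rather than deriving it. Since the lemma is a published result being used as a black box, this is a forgivable level of detail, but if you intend the proposal as a self-contained proof you should either carry out the $\cosh$ argument or cite Theorem 3 of Pinelis (1994) for the exponential moment bound and present only the optimization, which you have done correctly.
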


{\par \noindent{\bf Proof of Lemma \ref{keynormsoperator}\ }}  We
apply Lemma (\ref{Pinelislemma}) to the random variable $\eta_1$
defined by
\begin{equation}\label{eta1}
\eta_1 (x) =\left(L_K+\lambda I\right)^{-1/2} \langle \cdot,
K_x\rangle_K K_{x}, \qquad x\in {\mathcal X}
\end{equation}
It takes values in $HS({\mathcal H}_K)$, the Hilbert space of
Hilbert-Schmidt operators on ${\mathcal H}_K$, with inner product
$\langle A, B\rangle_{HS} = \hbox{Tr}(B^T A).$ Here Tr denotes the
trace of a (trace-class) linear operator. The norm is given by
$\|A\|_{HS}^2 =\sum_{i} \|A e_i\|_K^2$ where $\{e_i\}$ is an
orthonormal basis of ${\mathcal H}_K$. The space $HS({\cal H}_K)$ is
a subspace of the space of bounded linear operators on ${\cal H}_K$,
denoted as $(L({\cal H}_K), \|\cdot\|)$, with the norm relations
\begin{equation}\label{normrelation}
 \|A\| \leq \|A\|_{HS},
\qquad \|A B \|_{HS} \leq \|A\|_{HS} \|B\|.
\end{equation}

Now we use effective dimensions to estimate norms involving
$\eta_1$. The random variable $\eta_1$ defined by (\ref{eta1}) has
mean $E(\eta_1) = \left(L_K+\lambda I\right)^{-1/2} L_K$ and sample
mean $\left(L_K+\lambda I\right)^{-1/2} L_{K, D (x)}$. Recall the
set of normalized (in ${\mathcal H}_K$) eigenfunctions
$\{\varphi_i\}_{i}$ of $L_K$. It is an orthonormal basis of
${\mathcal H}_K$. If we regard $L_K$ as an operator on
$L^2_{\rho_X}$, the normalized eigenfunctions in $L^2_{\rho_X}$ are
$\{\frac{1}{\sqrt{\lambda_i}} \varphi_i\}_{i}$ and they form an
orthonormal basis of the orthogonal complement of the eigenspace
associated with eigenvalue $0$. By the Mercer Theorem, we have the
following uniform convergent Mercer expansion
\begin{equation}\label{Mercer}
 K(x, y) = \sum_{i} \lambda_i \frac{1}{\sqrt{\lambda_i}} \varphi_i (x) \frac{1}{\sqrt{\lambda_i}}\varphi_i (y) =
\sum_{i} \varphi_i (x) \varphi_i (y).
\end{equation}
Take the orthonormal basis $\{\varphi_i\}_{i}$ of ${\mathcal H}_K$.
By the definition of the HS norm, we have
$$ \|\eta_1 (x)\|_{HS}^2 = \sum_i \left\|\left(L_K+\lambda I\right)^{-1/2}
\langle \cdot, K_x\rangle_K K_{x} \varphi_i\right\|_K^2. $$ For a
fixed $i$,
$$ \langle \cdot, K_x\rangle_K K_{x} \varphi_i = \varphi_i (x) K_x, $$
and $K_x \in {\mathcal H}_K$ can be expended by the orthonormal
basis $\{\varphi_\ell\}_{\ell}$ as
\begin{equation}\label{expandKx}
K_x = \sum_\ell \langle \varphi_\ell, K_x\rangle_K \varphi_\ell =
\sum_\ell \varphi_\ell (x) \varphi_\ell.
\end{equation}
Hence
\begin{eqnarray*} \|\eta_1 (x)\|_{HS}^2 &=& \sum_i \left\|\varphi_i (x) \sum_\ell \varphi_\ell (x) \left(L_K+\lambda I\right)^{-1/2} \varphi_\ell\right\|_K^2 \\
&=& \sum_i \left\|\varphi_i (x) \sum_\ell \varphi_\ell (x)
\frac{1}{\sqrt{\lambda_\ell + \lambda}} \varphi_\ell\right\|_K^2 =
\sum_i \left(\varphi_i (x)\right)^2 \sum_\ell
\frac{\left(\varphi_\ell (x)\right)^2}{\lambda_\ell + \lambda}.
\end{eqnarray*}
Combining this with (\ref{Mercer}), we see that
\begin{equation}\label{eta1bound}
 \|\eta_1(x)\|_{HS}^2 = K(x, x) \sum_\ell \frac{\left(\varphi_\ell(x)\right)^2}{\lambda_\ell + \lambda}, \qquad \forall x\in {\mathcal X}
\end{equation}
and
\begin{eqnarray*}
 E\left[\|\eta_1(x)\|^2_{HS}\right]  \leq  \kappa^2 E\left[\sum_\ell \frac{\left(\varphi_\ell (x)\right)^2}{\lambda_\ell + \lambda}\right] = \kappa^2 \sum_\ell \frac{\int_{\mathcal X} \left(\varphi_\ell(x)\right)^2 d \rho_X}{\lambda_\ell + \lambda}.
\end{eqnarray*}
But
\begin{equation}\label{eigenvi}
 \int_{\mathcal X} \left(\varphi_\ell (x)\right)^2 d \rho_X  = \left\|\varphi_\ell\right\|^2_{L^2_{\rho_X}} = \left\|\sqrt{\lambda_\ell} \frac{1}{\sqrt{\lambda_\ell}}\varphi_\ell\right\|^2_{L^2_{\rho_X}} = \lambda_\ell.
 \end{equation}
So we have
\begin{equation}\label{Eeta1} E\left[\|\eta_1\|^2_{HS}\right]  \leq \kappa^2 \sum_\ell \frac{\lambda_\ell}{\lambda_\ell + \lambda}
= \kappa^2 \hbox{Tr}\left(\left(L_K+\lambda I\right)^{-1} L_K\right)
= \kappa^2 {\mathcal N}(\lambda)
\end{equation}
and
$$ E\left\|\frac{1}{|D|} \sum_{x\in D (x)} \eta_1 (x) - E[\eta_1]\right\|^2_{HS} = E\left[\left\|\left(L_{K} +\lambda I\right)^{-1/2} \left\{L_K- L_{K, D(x)}\right\}\right\|_{HS}^2\right] \leq \frac{\kappa^2 {\mathcal N}(\lambda)}{|D|}. $$
Then our desired inequality in Part (a) follows from the first
inequality of (\ref{normrelation}).

From (\ref{expandKx}) and (\ref{eta1bound}), we find a bound for
$\eta_1$ as
$$ \|\eta_1 (x)\|_{HS} \leq \kappa \frac{1}{\sqrt{\lambda}} \sqrt{\sum_\ell \left(\varphi_\ell(x)\right)^2} \leq  \frac{\kappa}{\sqrt{\lambda}} \sqrt{K (x, x)} \leq  \frac{\kappa^2}{\sqrt{\lambda}}, \qquad \forall x\in {\mathcal X}. $$
Applying Lemma \ref{Pinelislemma} to the random variable $\eta_1$
with $\widetilde{M} = \frac{\kappa^2}{\sqrt{\lambda}}$, we know by
(\ref{normrelation}) that with confidence at least $1-\delta$,
\begin{eqnarray*}
\left\|E[\eta_1] - \frac{1}{|D|} \sum_{x\in D(x)} \eta_1 (x)\right\| &\leq& \left\|E[\eta_1] - \frac{1}{|D|} \sum_{x\in D(x)} \eta_1 (x)\right\|_{HS} \\
 &\leq& {2 \kappa^2 \log \bigl(2/\delta\bigr) \over |D|\sqrt{\lambda}}  + \sqrt{\frac{2 \kappa^2 {\mathcal N}(\lambda)\log \bigl(2/\delta\bigr)}{|D|}}.
\end{eqnarray*}
Writing the above bound by taking a factor $\frac{2 \kappa \log
\bigl(2/\delta\bigr)}{\sqrt{|D|}}$, we get the desired bound
(\ref{normprepareeta2}).

Recall ${\mathcal B}_{|D|, \lambda}$ defined by (\ref{Bdef}). Apply
the formula (\ref{Expectedform}) for nonnegative random variables to
$\xi = \left\|\left(L_{K} +\lambda I\right)^{-1/2} \left\{L_K- L_{K,
D(x)}\right\}\right\|^d$ and use the bound
$$ \hbox{Prob}\left[\xi > t\right] = \hbox{Prob}\left[\xi^{\frac{1}{d}} > t^{\frac{1}{d}}\right] \leq 2 \exp\left\{-\frac{t^{\frac{1}{d}}}{{\mathcal B}_{|D|, \lambda}}\right\} $$
derived from (\ref{normprepareeta2}) for $t\geq\log^d 2\mathcal
B_{|D|,\lambda}$. We find
$$ E\left[\left\|\left(L_{K} +\lambda I\right)^{-1/2}
 \left\{L_K- L_{K, D(x)}\right\}\right\|^d\right] \leq \log^d 2\mathcal
B_{|D|,\lambda}+ \int_0^\infty 2
\exp\left\{-\frac{t^{\frac{1}{d}}}{{\mathcal B}_{|D|,
\lambda}}\right\} d t. $$ The second term in the right hand of above
equation equals $2 d {\mathcal B}_{|D|, \lambda}^d \int_0^\infty
u^{d-1} \exp\left\{-u\right\} d u$. Then the desired bound in Part
(c) follows from $\int_0^\infty u^{d-1} \exp\left\{-u\right\} d u =
\Gamma (d)$ and the lemma is proved.\hfill \BlackBox

{\par \noindent{\bf Proof of Lemma \ref{keynorms}\ }} Consider the
random variable $\eta_2$ defined by
\begin{equation}\label{eta2}
\eta_2 (z) =\left(L_K+\lambda I\right)^{-1/2}\left(K_{x}\right),
\qquad z=(x, y)\in {\mathcal Z}.
\end{equation}
It takes values in ${\mathcal H}_K$. By (\ref{expandKx}), it
satisfies
$$ \left\|\eta_2 (z)\right\|_K = \left\|\left(L_K+\lambda I\right)^{-1/2} \left(\sum_\ell \varphi_\ell (x) \varphi_\ell\right)\right\|_K = \left(\sum_\ell \frac{\left(\varphi_\ell (x)\right)^2}{\lambda_\ell + \lambda}\right)^{1/2}. $$
So
$$ E\left[\left\|\left(L_{K} +\lambda I\right)^{-1/2} (K_x)\right\|_K^2\right] = E\left[\sum_{\ell} \frac{(\varphi_\ell (x))^2}{\lambda_\ell + \lambda}\right] = {\mathcal N}(\lambda). $$
This is the statement of Part (a).

For Part (b), we consider another random variable $\eta_3$ defined
by
\begin{equation}\label{eta2}
\eta_3 (z) =\left(L_K+\lambda I\right)^{-1/2}\left(g(z)
K_{x}\right), \qquad z=(x, y)\in {\mathcal Z}.
\end{equation}
It takes values in ${\mathcal H}_K$ and satisfies
$$ \left\|\eta_3 (z)\right\|_K = |g(z)| \left\|\left(L_K+\lambda I\right)^{-1/2} \left(K_x\right)\right\|_K = |g(z)| \left(\sum_\ell \frac{\left(\varphi_\ell (x)\right)^2}{\lambda_\ell + \lambda}\right)^{1/2}. $$
So
$$ \left\|\eta_3 (z)\right\|_K \leq \frac{\kappa \|g\|_\infty}{\sqrt{\lambda}}, \qquad z\in {\mathcal Z}  $$
and
$$ E\left[\|\eta_3\|^2_{K}\right]  \leq \|g\|_\infty^2 E\left[\sum_\ell \frac{\left(\varphi_\ell (x)\right)^2}{\lambda_\ell + \lambda}\right] = \|g\|_\infty^2 {\mathcal N}(\lambda).
$$
Applying Lemma \ref{Pinelislemma} proves the statement in Part (b).
\hfill \BlackBox

\acks{Three anonymous referees and the action editor have carefully
read the paper and have provided to us numerous constructive
suggestions. As a result, the overall quality of the paper has been
noticeably enhanced, to which we feel much indebted and are
grateful. The work described in this paper  is supported partially
by the Research Grants Council of Hong Kong [Project No. CityU
11304114]. The corresponding author is Ding-Xuan Zhou.}

\end{document}